\documentclass[11pt]{article}
\usepackage[margin=1in]{geometry}
\usepackage{amsmath,amssymb,amsthm}
\usepackage{booktabs}
\usepackage{graphicx}
\usepackage{microtype}
\usepackage{placeins}
\usepackage{longtable}
\usepackage{array}
\usepackage[numbers,sort&compress]{natbib}
\usepackage[hidelinks]{hyperref}

\usepackage{tikz}
\usetikzlibrary{positioning, fit, arrows.meta}

\theoremstyle{definition}
\newtheorem{theorem}{Theorem}
\newtheorem{proposition}[theorem]{Proposition}
\newtheorem{corollary}[theorem]{Corollary}
\newtheorem{definition}[theorem]{Definition}
\newtheorem{lemma}[theorem]{Lemma}
\theoremstyle{remark}
\newtheorem{remark}[theorem]{Remark}

\newcommand{\qQQ}{q_{\mathrm{QQ}}}
\newcommand{\OSS}{\mathrm{OSS}}
\newcommand{\codeurl}{\url{https://github.com/pilsungk/llm-qq}} 

\title{Auditing Question-Order Effects in Large Language Models with the QQ Equality: Mechanism Characterization and a Saturation Caveat}
\author{Pilsung Kang\\
Department of Software Science, Dankook University\\
Yongin, South Korea\\
\texttt{pilsungk@dankook.ac.kr}}
\date{}

\begin{document}
\maketitle

\begin{abstract}
Question-order effects in human survey data have been reported to approximately
satisfy the QQ (quantum question) equality, a parameter-free prediction of the
standard projective quantum question-order model. We develop this equality into
an audit framework for sequential binary judgments of autoregressive large
language models (LLMs). Theoretically, we characterize mechanism families that
satisfy QQ robustly, show that classical repetition can reproduce the equality
exactly, and combine QQ with the rank-2 Contextuality-by-Default criterion
through $|\qQQ|\le\OSS$. This separates order sensitivity, QQ imbalance, and
residual contextuality rather than treating them as interchangeable signatures.
Methodologically, we introduce a committed multi-turn forced-branch protocol
that reconstructs order-conditioned joint distributions from next-token
log-probabilities under counterbalanced label mappings and pre-specified health
gates.  A first-signal pilot on an open-weight instruction-tuned model reveals
the central measurement problem. Although all pre-specified health gates passed,
the binary-conditioned distributions were near-deterministic for 17 of 18 item
pairs under the direct-evaluation framing and 7 of 8 under the persona framing.
Label assignment materially changed several mapping-specific QQ verdicts, and no
item was certified as residually contextual.  Thus, under the tested conditions,
the observed QQ outcomes did not uniquely identify a response mechanism in the
presence of a saturated and label-sensitive measurement interface. The main
implication is methodological: next-token probabilities should not be
interpreted as survey-response distributions without first establishing adequate
dispersion. We therefore argue that saturation screening and label
counterbalancing should precede structural interpretation in distribution-level
audits of LLM judgments.
\end{abstract}

\section{Introduction}

Question order changes human survey answers, sometimes dramatically.
In a Gallup poll conducted in September 1997, half of the respondents were asked about Bill Clinton first and then
Al Gore; the other half answered in the opposite order. In the
Clinton-first group, 50\% rated Clinton honest and trustworthy, and
Gore, asked immediately after, received 60\%. In the Gore-first
group, 68\% rated Gore honest and trustworthy, while Clinton, asked
second, received 57\%. Each politician's rating thus moved with
question position --- Clinton from 50\% to 57\%, Gore from 68\% to
60\% --- so that the 18-point gap between their first-position
ratings shrank to 3 points in second position \citep{moore:2002,wang:2013:qq}.
Order effects of this kind are pervasive
and well documented, and by themselves they are not surprising.

What is surprising is a sharper regularity hiding beneath them. Define
the order-asymmetry statistic
\begin{equation}
\qQQ := \bigl[p_{AB}(A_y,B_n)+p_{AB}(A_n,B_y)\bigr]
      - \bigl[p_{BA}(B_y,A_n)+p_{BA}(B_n,A_y)\bigr],
\end{equation}
where $p_{AB}$ and $p_{BA}$ denote the joint answer distributions when
question $A$ is asked first and when question $B$ is asked first,
respectively, and $A_y$, $A_n$, $B_y$, $B_n$ denote ``yes'' and ``no''
answers to each question; 
$\qQQ$ is the difference between the two orders in the probability
that the two answers disagree, and probability theory alone does not
force it to vanish, since $p_{AB}$ and $p_{BA}$ come from different
respondent groups.
The \emph{QQ (quantum question)
equality} is the statement $\qQQ = 0$. \citet{wang:2014:pnas}
documented that across 72 datasets, mostly national field studies
(plus laboratory experiments), this equality holds to close
approximation.
This is an a priori, parameter-free prediction of the projective
quantum question-order model~\citep{wang:2013:qq}. Equivalently, the
probability that the two answers \emph{disagree} is the same in both
orders: in the Clinton--Gore poll above, the disagreement probability
was $0.221$ in one order and $0.225$ in the other, even though the
individual agreement rates moved by up to eight points. The equality
is thus a non-parametric constraint on the pair of joint distributions
--- not a no-order-effect claim --- and its empirical success is what
makes it a candidate audit criterion rather than a curiosity.

Large language models (LLMs) are increasingly used as proxies for human respondents, and exhibit well-documented order and format sensitivities~\citep{tjuatja:2024:acl,rupprecht:2026:acl,sandhan:2025:emnlp}. This raises a natural audit question: does the sequential judgment behavior of an autoregressive LLM --- one that defines a probability distribution over the next token given the preceding text --- satisfy the same non-parametric constraint that human populations do? To the best of our knowledge (within our 2026-07 search scope; Section~\ref{sec:related}), no prior work has tested the QQ equality on LLMs under a multi-turn sequential protocol with answers committed to the conversation history.

A human study estimates response distributions from repeated sampling across
respondents. By contrast, an LLM study typically treats the model's next-token
probability distribution, read out at a forced-choice answer position, as the
response distribution of a synthetic respondent. This substitution is convenient
and widely used, but it rests on a measurement assumption rather than a fact:
that the binary-conditioned pair obtained by renormalizing the two label
probabilities can genuinely play the role that a response distribution plays in
a human study. After this conditioning, the pair sums to one by construction,
but it need not carry usable distributional structure; this is a real
possibility for an instruction-tuned model --- one further trained to follow
instructions and answer as an assistant, as is typical of deployed chat models ---
facing a forced binary choice. The QQ equality is informative as an audit
criterion only when the assumption holds, and making it explicit and testable is
as much a part of our task as the QQ test itself.

One caution governs the interpretation of any outcome. We deliberately use compatibility language throughout: QQ satisfaction is consistent with projective measurement \emph{and} with symmetric classical repetition (Proposition~\ref{prop:symrep}); QQ violation rejects only the standard projective model, not ``quantum-like'' accounts in general --- quantum instruments based on positive operator-valued measures (POVMs) can violate QQ~\citep{lebedev:2023}.

Answering this question well therefore requires more than running a test. It requires knowing \emph{what a result would mean} (which mechanism classes are compatible with satisfaction or violation), \emph{whether the measurement instrument is adequate}, and \emph{what an inadequate instrument looks like in practice}. Our contributions track exactly these three needs:
\begin{enumerate}
\item \textbf{Mechanism characterization (Section~\ref{sec:theory}).} We
formalize mechanisms as maps from item parameters to sequential response
kernels and characterize robust QQ satisfaction in two families
(Theorem~\ref{thm:mi}, Proposition~\ref{prop:mixture}), prove closure
under order-matched mixing (Proposition~\ref{prop:mixing}), exhibit a
classical mechanism that satisfies QQ identically while producing order
effects (Proposition~\ref{prop:symrep}), and translate the rank-2 Contextuality-by-Default (CbD)
criterion into audit coordinates (Corollary~\ref{cor:cbd}). 
The result is a two-layer discriminant table that separates mechanism compatibility from CbD certification.
\item \textbf{Audit methodology (Section~\ref{sec:method}).} We
develop a pre-specified, audit-logged pipeline for the
\emph{binary-conditioned audit estimand} $P(y\mid y\text{ or }n)$
under multi-turn sequential presentation. The pipeline applies to
any model that exposes next-token log-probabilities and admits
single-token answer labels. It provides exact component envelopes
for unassigned probability mass, sound and potentially conservative
certification bounds, pre-specified Clopper--Pearson spot
checks~\citep{clopper:1934}, full label counterbalancing with
semantic canonicalization, and a saturation diagnostic.
\item \textbf{A first-signal pilot and a measurement caveat
(Section~\ref{sec:pilot}).} We report a first-signal pilot on
Qwen3-4B-Instruct-2507~\citep{yang:2025:qwen3} under two framings,
direct evaluation and survey-respondent persona. All pre-specified
health checks passed, yet the response distributions were saturated
(near-deterministic) for 17/18 and 7/8 item pairs respectively.
Observed QQ violations reflected saturated-regime patterns,
deterministic order flips and label-mixture effects, rather than a
single response mechanism. No item was certified residually
contextual, a few remained indeterminate at the certification
bounds, and label assignment had a substantive effect. The central
finding is that forced-binary next-token log-probabilities failed to
expose the distributional structure the QQ audit needs \emph{under
the tested model and prompting conditions}. A pre-specified
saturation diagnostic catches this failure early and auditably.
\end{enumerate}

Taken together, these contributions shift the audit question from
whether an LLM satisfies the QQ equality to whether the measurement
interface is sufficiently informative for a QQ verdict to support
structural interpretation. Measurement adequacy is therefore not
merely an implementation detail, but a prerequisite for
claims about the underlying response process.

\section{Related Work}\label{sec:related}

Our work sits at the intersection of two literatures: the quantum-cognition line that produced the QQ equality and its contextuality-theoretic refinements, and the language-model evaluation line that treats LLMs as survey respondents and scrutinizes how their answers are measured.

\paragraph{QQ equality and quantum question-order models.}
\citet{wang:2013:qq} derived the QQ equality from projective sequential measurement and showed that standard Bayesian and Markov accounts do not generally satisfy it; \citet{wang:2014:pnas} confirmed it across 72 datasets. The Supporting Information of \citet{wang:2014:pnas} further shows that a repeat-choice model constrained to satisfy QQ predicts context effects proportional to marginal differences --- empirically rejected in humans ($r=.04$) --- and that an anchoring-adjustment model satisfies QQ iff its two order biases are equal; both facts foreshadow our characterization results, which we state and prove at the mechanism level. \citet{ozawa:2021:jmp} unify order effects, response replicability, and the QQ equality with quantum instruments; \citet{lebedev:2023} show general POVMs can violate QQ.

\paragraph{Contextuality-by-Default.} For inconsistently connected systems, CbD~\citep{dzhafarov:2016:cbd,kujala:2016} provides the correct contextuality criterion; for rank-2 cyclic systems it reduces to a single inequality that we show equals $|\qQQ|\le\OSS$ in audit coordinates (Corollary~\ref{cor:cbd}). Prior CbD analyses of question-order data interpret human order effects as direct influences.  Recent concurrent work applies CbD to generative LLMs in other task families: \citet{kumar:2026:fail-invariance-llm} report irreducible context-dependence in a pronoun-inference setting, while
\citet{tuan:2026} reports zero corrected CbD degree in an
enterprise-agent auditing pilot, with direct influence reported as the
dominant signal. The latter result is qualitatively aligned with the
absence of certified residual contextuality in our pilot, although our
audit does not identify direct influence as the generating mechanism.

\paragraph{Quantum-inspired tests on LLMs.} \citet{imannezhad:2026} test the conjunction fallacy, the disjunction fallacy, and binary complementarity on GPT-5; \citet{aerts:2026:entropy} report Bell-type violations and Bose--Einstein fits in concept combinations (interpretation debated); \citet{lo:2025} report large-scale sheaf- and CbD-contextuality in BERT probability distributions over a co-reference-style linguistic schema. None tests the QQ equality under sequential multi-turn presentation.

\paragraph{LLMs as survey respondents.} \citet{tjuatja:2024:acl} document human-like response biases; 
selection-order and option-label biases in multiple-choice formats are likewise documented in single-shot
settings~\citep{zheng:2024:llm-mcq-bias,pezeshkpour:2024:llm-mcq-sens}, which our counterbalanced label mappings are designed to control and quantify in the sequential regime; \citet{rupprecht:2026:acl} test 18 LLMs on World Values Survey items under ten perturbations (334{,}800 interviews), finding a consistent recency bias in almost all tested models; \citet{sandhan:2025:emnlp} evaluate LLM personality tests with conversational context and report model-dependent sensitivity to question ordering. Our results add a distribution-level caveat to this literature: under forced-binary formats the ``response distribution'' of an instruction-tuned model can be an artifact of a saturated next-token probability distribution, so distribution-level constructs (including QQ) require measurement designs that restore genuine dispersion.

\paragraph{First-token measurement critique.} \citet{wang:2024} show first-token probabilities can disagree with generated text answers. Our saturation finding extends the critique from \emph{disagreement} to \emph{degeneracy}: even when first-token measurement is internally consistent (our logprob and trajectory-sampling pathways agree, in the pre-specified spot checks, within exact Clopper--Pearson intervals), the measured distribution may carry almost no information beyond its argmax.

\section{Theory: Which Mechanisms Satisfy the QQ Equality?}
\label{sec:theory}

This section develops the interpretive machinery for reading a QQ
verdict. Two asymmetries motivate it: QQ violation alone does not
establish residual contextuality, since order effects themselves can
account for it~\citep{dzhafarov:2016:cbd}, and, as we show, QQ
satisfaction alone does not identify a unique generating mechanism.
We first characterize the mechanism classes compatible with
satisfaction, then translate the rank-2 CbD criterion into audit
coordinates, and combine the two into a two-layer discriminant table. Its
certification layer underpins the empirical analysis of
Section~\ref{sec:pilot}, and its compatibility layer states
theoretical diagnostic signatures.

\subsection{Setup}
\label{ss:setup}

For binary questions $A,B$ with answers $\alpha,\beta\in\{y,n\}$,
define first-position distributions $p_A(\alpha)$, $p_B(\beta)$ and
order-conditioned kernels $K_{AB}(\beta\,|\,\alpha)$,
$K_{BA}(\alpha\,|\,\beta)$ --- the conditional distributions of the
second answer given the first, one per order --- with joint distributions
$p_{AB}(\alpha,\beta)=p_A(\alpha)K_{AB}(\beta\,|\,\alpha)$ and
$p_{BA}(\beta,\alpha)=p_B(\beta)K_{BA}(\alpha\,|\,\beta)$, with joint
arguments listed in the order the questions are asked.
Writing $M_{XY}$ for the probability that the two answers disagree in order $XY$, $\qQQ = M_{AB}-M_{BA}$.

\begin{definition}[Mechanism; robust QQ]\label{def:mechanism}
A \emph{mechanism} is a map from \emph{item parameters} --- quantities
that vary across items, minimally the first-position tendencies
$(p_A,p_B)$ --- to the pair of order-conditioned kernels, with
\emph{internal parameters} (quantities fixed across items, such as a
repetition rate) held constant. It satisfies QQ \emph{robustly} if
$\qQQ=0$ throughout an admissible domain of item parameters whose
first-position marginals contain an open subset of $(0,1)^2$ ---
that is, the equality holds structurally rather than at fine-tuned
parameter points.
\end{definition}

Finite experiments certify compatibility only, never robustness. 
We write the \emph{mismatch transition rates} --- the probabilities
that the second answer differs from the first --- as
$a=K_{AB}(n|y)$, $b=K_{AB}(y|n)$, $c=K_{BA}(n|y)$, and $d=K_{BA}(y|n)$.
The \emph{order effects} on the marginals are
$OE_A = p_{BA}(A_y)-p_{AB}(A_y)$ and $OE_B = p_{AB}(B_y)-p_{BA}(B_y)$:
the change in each question's ``yes'' marginal when it is asked second
rather than first. We abbreviate $q$ for $\qQQ$ where unambiguous.

Throughout the paper, results are stated and proved in classical probability terms (response kernels and transition rates); quantum models enter only as particular mechanism classes to be included in or excluded from compatibility statements (e.g., Theorem~\ref{thm:wb}), and no quantum formalism is needed to follow the audit methodology or the empirical results.

\subsection{Mechanism characterization}
\label{ss:mechanism}

We now characterize which mechanisms satisfy the QQ equality robustly.
Proofs of all results in this subsection are given in Appendix~\ref{app:proofs}.

\begin{theorem}[Wang--Busemeyer \citep{wang:2013:qq}]
\label{thm:wb}
Projective sequential measurement satisfies $\qQQ=0$ for every state and
every pair of binary projective measurements.
\end{theorem}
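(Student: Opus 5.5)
We compute both disagreement probabilities directly in the projective formalism and show they coincide. Let the state be a density operator $\rho$ (it suffices to treat a pure state $|\psi\rangle$ and extend by linearity, since $\qQQ$ is affine in $\rho$). Let $P_A$ and $P_B$ be the ``yes'' projectors, with complements $\bar P_A = I-P_A$ and $\bar P_B = I-P_B$. By the Lüders rule, the probability of answering $\alpha$ to $A$ and then $\beta$ to $B$ is $\|P_\beta P_\alpha\psi\|^2 = \langle\psi|P_\alpha P_\beta P_\alpha|\psi\rangle$. This defines $p_{AB}$, and $p_{BA}$ is defined analogously. These are exactly the kernels of the Setup (Section~\ref{ss:setup}), with $p_A(\alpha)=\langle P_\alpha\rangle$.

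\textbf{Step 1: expand the disagreement probabilities.} In order $AB$, the disagreement probability is
\[
M_{AB} = \langle P_A \bar P_B P_A\rangle + \langle \bar P_A P_B \bar P_A\rangle .
\]
Substitute $\bar P_B = I - P_B$ and $\bar P_A = I-P_A$, and use $P_A^2=P_A$. The first term becomes $\langle P_A\rangle - \langle P_AP_BP_A\rangle$. The second becomes
\[
\langle P_B\rangle - \langle P_AP_B\rangle - \langle P_BP_A\rangle + \langle P_AP_BP_A\rangle .
\]
The triple-product terms cancel, leaving
\[
M_{AB} = \langle P_A\rangle + \langle P_B\rangle - \langle P_AP_B + P_BP_A\rangle .
\]

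\textbf{Step 2: symmetry.} The same computation with $A$ and $B$ interchanged gives
\[
M_{BA} = \langle P_B\rangle + \langle P_A\rangle - \langle P_BP_A + P_AP_B\rangle .
\]
This expression is manifestly symmetric in $A$ and $B$, so $\qQQ = M_{AB}-M_{BA} = 0$. Since nothing depended on the choice of state or projectors, the identity holds for every state and every pair of binary projective measurements. Equivalently, $M_{XY} = \langle (P_A-P_B)^2\rangle$, which gives a one-line symmetric form of the result.

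\textbf{Main obstacle.} The algebra is routine; the only care needed is in the setup. Sequential projective measurement must be modeled with Lüders state update, so that the second measurement acts on the collapsed state. The joint probabilities must be written as expectations of $P_\alpha P_\beta P_\alpha$ rather than of the noncommuting product $P_\beta P_\alpha$. The passage from pure states to mixed states then follows from linearity of the trace. It also helps to note where the argument would fail: for general POVM instruments, $E^{1/2}FE^{1/2}$ does not expand with the idempotence cancellation above. This explains why the result is specific to the projective model, consistent with \citet{lebedev:2023}.
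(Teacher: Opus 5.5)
Your argument is correct and complete. The expansion gives $M_{AB}=\langle P_A\rangle+\langle P_B\rangle-\langle P_AP_B+P_BP_A\rangle=\langle(P_A-P_B)^2\rangle$, which is manifestly symmetric in $A$ and $B$, and linearity in $\rho$ handles mixed states.

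The paper, however, never proves Theorem~\ref{thm:wb}. It imports the result from \citet{wang:2013:qq} and reports only a numerical reproduction. The only in-paper argument that touches it is Lemma~\ref{lem:transym} combined with \eqref{eq:disagree}, and that covers only rank-1 projectors on a qubit. There all four mismatch rates equal $|\langle A_y|B_n\rangle|^2$, so $M_{AB}=M_{BA}$ without reference to $u$ or $v$. That route gives more item-level structure: the per-branch equality $a=b=c=d$ is what Proposition~\ref{prop:symrep}(ii) needs to exclude symmetric repetition.

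It cannot, however, deliver the theorem as stated. Beyond the qubit rank-1 case, at least one outcome projector has rank at least two. The post-measurement state after that outcome then depends on $\rho$, so the corresponding mismatch rates become state-dependent and in general $a\neq b$. The model then lies outside both Lemma~\ref{lem:transym} and the fixed-kernel class of Theorem~\ref{thm:mi}. Your operator identity shows why QQ survives anyway: only the weighted combination $M_{AB}=ua+(1-u)b$ must be symmetric, not the individual rates. So your derivation is the one that actually supports the universal quantifier over states and measurement pairs.

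One correction to your closing remark. Your cancellation relies on the ``no'' update operator being the complement of the ``yes'' one, $(I-P_A)^{1/2}=I-P_A^{1/2}$. For a L\"uders instrument with effect $E$ this fails in general, since $(I-E)^{1/2}\neq I-E^{1/2}$. That shows only that your proof does not extend, not that QQ fails, so the equality is not specific to the projective model.

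For equal-noise unsharp effects $E=(1-\eta)P+\tfrac{\eta}{2}I$ and $F=(1-\eta)Q+\tfrac{\eta}{2}I$, one has $(I-E)^{1/2}=cI-E^{1/2}$ with $c=\sqrt{\eta/2}+\sqrt{1-\eta/2}$. Carrying your expansion through then gives $M_{AB}=M_{BA}$ exactly, consistent with the paper's numerical sweeps over symmetric unsharp instruments. What \citet{lebedev:2023} establish is only that general POVM instruments \emph{can} violate QQ.
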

We reproduce this numerically to machine precision in our validation
suite. In the language of Section~\ref{ss:setup}, the projective
model is a mechanism whose item parameters are the state and
measurement axes, and the universal quantifier makes its QQ
satisfaction robust in the sense of Definition~\ref{def:mechanism}.
The next results show that this robust-satisfaction class also
contains purely classical mechanisms.

\begin{theorem}[Marginal-independent kernels]\label{thm:mi}
If the mismatch transition rates $a=K_{AB}(n|y)$, $b=K_{AB}(y|n)$,
$c=K_{BA}(n|y)$, $d=K_{BA}(y|n)$ are constants independent of
$(p_A,p_B)$, then with $u=p_A(y)$, $v=p_B(y)$,
\begin{equation}
\qQQ = u(a-b) - v(c-d) + (b-d),
\end{equation}
and robust QQ holds \textbf{iff} $a=b=c=d$.
\end{theorem}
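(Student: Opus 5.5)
The proof splits into two parts: a direct expansion of $\qQQ$ in terms of the mismatch rates, and then an affine-function argument for the robustness characterization.

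\emph{Step 1: the formula.} I will write the disagreement probability in each order from the definitions in Section~\ref{ss:setup}. In order $AB$, disagreement occurs either as $(A_y,B_n)$ or as $(A_n,B_y)$, so
\begin{equation*}
M_{AB}=p_A(y)K_{AB}(n|y)+p_A(n)K_{AB}(y|n)=ua+(1-u)b .
\end{equation*}
Symmetrically, in order $BA$,
\begin{equation*}
M_{BA}=vc+(1-v)d .
\end{equation*}
Subtracting, $\qQQ=M_{AB}-M_{BA}=u(a-b)-v(c-d)+(b-d)$, which is the displayed identity. This uses only that the kernels are conditional distributions, so $K(y|n)$ is the mismatch probability given a first answer $n$.

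\emph{Step 2: sufficiency.} If $a=b=c=d$, every coefficient in the formula vanishes. Hence $\qQQ=0$ for all $(u,v)\in(0,1)^2$, which is in particular robust in the sense of Definition~\ref{def:mechanism}.

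\emph{Step 3: necessity.} Suppose $\qQQ=0$ on an admissible domain whose marginals contain an open set $U\subseteq(0,1)^2$. Because $a,b,c,d$ are constants, $\qQQ$ is an affine function of $(u,v)$. An affine function that vanishes on a nonempty open set is identically zero, since all its partial derivatives and its value at an interior point must vanish. Reading off the coefficients gives:
\begin{itemize}
\item $a-b=0$ from the $u$-coefficient;
\item $c-d=0$ from the $v$-coefficient;
\item $b-d=0$ from the constant term.
\end{itemize}
Together these yield $a=b=c=d$.

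The only delicate point is that last step: it must be made explicit that the hypothesis ``independent of $(p_A,p_B)$'' is exactly what makes $\qQQ$ affine in $(u,v)$. That affineness is what lets vanishing on an open set force every coefficient to zero. Without it, the kernels could co-vary with the marginals and the coefficient comparison would fail.
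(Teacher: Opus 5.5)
Your proposal is correct and follows the paper's proof essentially step for step. You use the same decomposition $M_{AB}=ua+(1-u)b$ and $M_{BA}=vc+(1-v)d$. You then use the same observation that an affine function of $(u,v)$ with constant coefficients, vanishing on an open subset of $(0,1)^2$, must have zero coefficients, which forces $a=b$, $c=d$, $b=d$; the converse is immediate because $M_{AB}=a=M_{BA}$.
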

The 2D rank-1 projective model \textbf{with a fixed pair of measurements
under state variation} lies in this class, with
$a=b=c=d=\mathrm{Tr}(P_{A,y}P_{B,n})=|\langle A_y|B_n\rangle|^2$; if
projectors vary across items, the fixed-kernel hypothesis of
Theorem~\ref{thm:mi} no longer applies.

\begin{proposition}[Symmetric repetition]\label{prop:symrep}
Fix $r\in[0,1)$ and let
$K_{AB}(\beta\,|\,\alpha)=r\,\mathbf{1}[\beta=\alpha]+(1-r)\,p_B(\beta)$
and, symmetrically,
$K_{BA}(\alpha\,|\,\beta)=r\,\mathbf{1}[\alpha=\beta]+(1-r)\,p_A(\alpha)$.
Then:
\begin{enumerate}
\item[(i)] $\qQQ=0$ identically in $(p_A,p_B)$, i.e., QQ holds
robustly, while $OE_B = r\,(p_A(y)-p_B(y))$;
\item[(ii)] the mismatch rates $a=(1-r)(1-p_B(y))$ and
$b=(1-r)\,p_B(y)$ differ whenever $p_B(y)\neq 1/2$, whereas any 2D
rank-1 projective model has $a=b$ (Lemma~\ref{lem:transym}, Appendix~\ref{app:proofs}).
\end{enumerate}
\end{proposition}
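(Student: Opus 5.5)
The plan is a direct computation, since every quantity in the statement is an explicit linear or bilinear expression in $u=p_A(y)$, $v=p_B(y)$ and $r$. For (i), I first read off the mismatch rates of the two kernels. For $K_{AB}$ they are $a=K_{AB}(n|y)=(1-r)(1-v)$ and $b=K_{AB}(y|n)=(1-r)v$. For $K_{BA}$ they are $c=(1-r)(1-u)$ and $d=(1-r)u$. The disagreement probabilities are then
\[
M_{AB}=u\,a+(1-u)\,b=(1-r)\bigl[u(1-v)+(1-u)v\bigr]
\]
and
\[
M_{BA}=v\,c+(1-v)\,d=(1-r)\bigl[v(1-u)+(1-v)u\bigr].
\]
The bracketed expressions are identical, so $\qQQ=M_{AB}-M_{BA}=0$ for every $(u,v)\in(0,1)^2$. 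This domain is open, so QQ holds robustly in the sense of Definition~\ref{def:mechanism}.

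Note that Theorem~\ref{thm:mi} does not apply here, because the rates depend on the marginals. The identity is best seen as a symmetry: the kernel $r\,\mathbf{1}[\beta=\alpha]$ mixed with independent resampling makes the disagreement event symmetric under swapping the roles of $A$ and $B$.

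For the order effect, I compute the second-position marginal
\[
p_{AB}(B_y)=u\,K_{AB}(y|y)+(1-u)\,K_{AB}(y|n)=u\bigl[r+(1-r)v\bigr]+(1-u)(1-r)v=ru+(1-r)v.
\]
Subtracting the first-position value $v$ gives $OE_B=r(u-v)$. By symmetry $OE_A=r(v-u)$, so order effects are nonzero whenever $r>0$ and $u\neq v$.

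For (ii), the formulas for $a$ and $b$ are already in hand. We have $a-b=(1-r)(1-2v)$, which is nonzero exactly when $v\neq 1/2$, because $r<1$. The contrast with the projective model comes from Lemma~\ref{lem:transym}, which I take as given: in a 2D rank-1 projective model the transition probability $|\langle A_y|B_n\rangle|^2$ equals $|\langle A_n|B_y\rangle|^2$, so $a=b$. If I had to justify that lemma myself, I would note that in $\mathbb{C}^2$ the vectors $|B_y\rangle$ and $|B_n\rangle$ form an orthonormal basis. This gives $|\langle A_y|B_n\rangle|^2=1-|\langle A_y|B_y\rangle|^2$, and applying the same completeness argument to $A$ yields the symmetry.

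There is no real obstacle in this proof. The only step that needs care is checking that the cross terms in $M_{AB}$ and $M_{BA}$ coincide, and keeping the convention for the argument order in $p_{BA}$ straight.
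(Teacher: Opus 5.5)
Your proposal is correct and follows essentially the same route as the paper's proof. Like the paper, you compute the flip rates, show that $M_{AB}$ and $M_{BA}$ both equal $(1-r)\bigl[u(1-v)+(1-u)v\bigr]$, derive $OE_B=r(u-v)$ from $p_{AB}(B_y)=ru+(1-r)v$, and read off $a-b=(1-r)(1-2v)$; your explicit $c$, $d$, $OE_A=r(v-u)$ and completeness sketch for Lemma~\ref{lem:transym} are correct extras that the paper covers by ``the same computation'' or in the separate lemma proof.
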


Hence, at the kernel level, symmetric repetition is not realizable by
any 2D rank-1 projective model at any item with $p_B(y)\neq 1/2$;
identifying both kernel branches from observed behavior additionally
requires interior first-answer marginals. QQ satisfaction therefore
does not uniquely identify a (2D rank-1) projective mechanism.

\begin{proposition}[Mixture-repetition family]\label{prop:mixture}
Fix polarity- and order-dependent retention rates
$r_{AB}[\alpha]$, $r_{BA}[\beta]\in[0,1]$ --- where $r_{AB}[\alpha]$
denotes the retention rate after first answer $\alpha$ ($y$ or $n$)
in order $AB$, and symmetrically for $r_{BA}[\beta]$ --- and let
$K_{AB}(\beta\,|\,\alpha)
 = r_{AB}[\alpha]\,\mathbf{1}[\beta=\alpha]
 + (1-r_{AB}[\alpha])\,p_B(\beta)$,
and symmetrically for $K_{BA}$. Then:
\begin{enumerate}
\item[(i)]
$\qQQ = p_A(y)p_B(n)\,(r_{BA}[n]-r_{AB}[y])
      + p_A(n)p_B(y)\,(r_{BA}[y]-r_{AB}[n])$,
and robust QQ holds iff $r_{AB}[y]=r_{BA}[n]$ and
$r_{AB}[n]=r_{BA}[y]$ (cross-polarity, cross-order symmetry);
\item[(ii)] order-invariant retention
($r_{AB}[\cdot]=r_{BA}[\cdot]=r[\cdot]$) gives
$q=(r[n]-r[y])\,(p_A(y)-p_B(y))$;
\item[(iii)] polarity-invariant retention
($r_{XY}[y]=r_{XY}[n]=r_{XY}$) gives
$q=(r_{BA}-r_{AB})\,S$, where $S := p_A(y)p_B(n)+p_A(n)p_B(y)$.
\end{enumerate}
\end{proposition}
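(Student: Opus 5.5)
Write $u=p_A(y)$, $v=p_B(y)$, $\bar u=1-u$, $\bar v=1-v$. The plan is to compute the two disagreement probabilities $M_{AB}$ and $M_{BA}$ directly from the kernels, using the mismatch-rate bookkeeping of Section~\ref{ss:setup}, and then read off (i)--(iii) as specializations.

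\textbf{Mismatch rates.} For order $AB$, a mismatch after first answer $y$ occurs only through the non-retention branch landing on $n$. So $a=K_{AB}(n|y)=(1-r_{AB}[y])\,\bar v$. Likewise $b=K_{AB}(y|n)=(1-r_{AB}[n])\,v$. In order $BA$ the second question is $A$, so $c=K_{BA}(n|y)=(1-r_{BA}[y])\,\bar u$ and $d=K_{BA}(y|n)=(1-r_{BA}[n])\,u$.

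\textbf{The difference $q$.} We have
\[
M_{AB}=u\bar v(1-r_{AB}[y])+\bar u v(1-r_{AB}[n]),\qquad
M_{BA}=v\bar u(1-r_{BA}[y])+\bar v u(1-r_{BA}[n]).
\]
In the difference the ``1'' terms cancel, because both expressions contain $u\bar v+\bar u v=S$. Grouping the remaining terms by the coefficients $u\bar v$ and $\bar u v$ gives exactly the formula in (i). This cancellation is the only real content of the computation.

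\textbf{Robustness claim in (i).} If the stated equalities hold, $q\equiv 0$ trivially. For the converse, note that the retention rates are internal parameters, fixed across items. If $q=0$ on an open set of $(u,v)$, then the two coefficient functions $u\bar v$ and $\bar u v$ must be linearly independent there. They are: their ratio $u\bar v/(\bar u v)$ is nonconstant on any open subset of $(0,1)^2$. Alternatively, one can evaluate at two points with different ratios and solve the resulting $2\times2$ system, which is nonsingular. Either way, both coefficients $r_{BA}[n]-r_{AB}[y]$ and $r_{BA}[y]-r_{AB}[n]$ vanish. I expect this step, making the linear-independence argument precise on an arbitrary open set, to be the only one needing care.

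\textbf{Specializations.} For (ii), substituting order-invariant retention gives
\[
q=(r[n]-r[y])\,u\bar v+(r[y]-r[n])\,\bar u v=(r[n]-r[y])(u\bar v-\bar u v).
\]
Since $u\bar v-\bar u v=u-v$, this is the stated formula. For (iii), polarity-invariant retention makes both coefficients equal to $r_{BA}-r_{AB}$, so $q=(r_{BA}-r_{AB})\,S$.

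As a consistency check, Proposition~\ref{prop:symrep} is the case where all retention rates equal $r$, which yields $q=0$.
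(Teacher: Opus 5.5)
Your proposal is correct and follows essentially the same route as the paper's proof. Both compute the flip rates from the kernels, form $M_{AB}$ and $M_{BA}$, subtract, and deduce the robustness claim from the linear independence of $u(1-v)$ and $(1-u)v$ on open subsets of $(0,1)^2$, with (ii)--(iii) obtained by substitution. Your odds-ratio argument simply makes explicit the independence that the paper asserts; just reword ``must be linearly independent'' as ``are linearly independent, so both coefficients vanish.''
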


The two closed forms (ii)--(iii) provide theoretically motivated
diagnostic signatures for observed violations: polarity-dependent
repetition (acquiescence-type asymmetry) and order-dependent
repetition, respectively. 

\begin{proposition}[Order-matched mixing closure]\label{prop:mixing}
Let $J^{(1)},\dots,J^{(K)}$ be joint pairs each satisfying
$\qQQ(J^{(k)})=0$, and let $w_1,\dots,w_K\ge 0$, $\sum_k w_k = 1$, be
mixture weights applied identically to both orders (the weights may
depend on the item parameters $\theta$ but not on the order). Then:
\begin{enumerate}
\item[(i)] the mixture $\sum_k w_k J^{(k)}$ satisfies $\qQQ=0$;
\item[(ii)] if the weights may instead depend on the order, the
mixture can violate QQ: components with different order-invariant
disagreement probabilities, selected with order-dependent weights,
yield $\qQQ\neq 0$.
\end{enumerate}
\end{proposition}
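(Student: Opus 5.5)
The plan rests on the observation that $\qQQ$ is a \emph{linear} functional of the joint pair $J=(p_{AB},p_{BA})$. Concretely, $\qQQ(J)=M_{AB}(p_{AB})-M_{BA}(p_{BA})$, where each disagreement probability $M_{XY}$ is a sum of two entries of the corresponding joint distribution. Both parts then follow quickly.

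For (i), I will apply the mixture entrywise in each order. The key point is that the weights are order-independent: the same $w_k$ multiply $p^{(k)}_{AB}$ and $p^{(k)}_{BA}$. This gives
\begin{equation}
\qQQ\Bigl(\sum_k w_k J^{(k)}\Bigr)=\sum_k w_k\bigl(M^{(k)}_{AB}-M^{(k)}_{BA}\bigr)=\sum_k w_k\,\qQQ(J^{(k)})=0 .
\end{equation}
Dependence of the $w_k$ on $\theta$ does no harm, because the identity holds pointwise at each $\theta$. I will also note that each mixed object is again a valid pair of joint distributions, being a convex combination. If robustness in the sense of Definition~\ref{def:mechanism} is wanted, it follows immediately: each component vanishes on the admissible domain, and hence so does the mixture.

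For (ii), I will build an explicit counterexample with $K=2$. The components will be order-symmetric, so each satisfies $\qQQ=0$ on its own, but their disagreement probabilities differ. For $J^{(1)}$ I take a perfect-repetition kernel, $r=1$ in the symmetric-repetition construction (the limiting case of Proposition~\ref{prop:symrep}, checked directly), which gives $M^{(1)}_{AB}=M^{(1)}_{BA}=0$. For $J^{(2)}$ I take independent answers with the same marginals in both orders, which gives $M^{(2)}=p_A(y)p_B(n)+p_A(n)p_B(y)=S$. Alternatively, I could use the symmetric-repetition family at two rates $r_1\neq r_2$, for which $M=(1-r)S$.

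I then assign weights $(w,1-w)$ in order $AB$ and $(w',1-w')$ in order $BA$. The mixture has $\qQQ=(w'-w)\bigl(M^{(1)}-M^{(2)}\bigr)$, which equals $(w'-w)(r_2-r_1)S$ in the two-rate version. This is nonzero whenever $w\neq w'$ and $S>0$, so it suffices to choose interior marginals, for example $p_A(y)=p_B(y)=1/2$.

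The only point needing care is to state precisely what ``order-dependent weights'' means. The mixture is formed separately in each order, as $\sum_k w_k^{AB}p^{(k)}_{AB}$ and $\sum_k w_k^{BA}p^{(k)}_{BA}$. I will note that the general formula is $\qQQ=\sum_k (w_k^{AB}-w_k^{BA})M^{(k)}$, using $M_{AB}^{(k)}=M_{BA}^{(k)}=:M^{(k)}$ for each component. This shows that a violation requires exactly order-dependent weights combined with components whose disagreement probabilities differ, which matches the phrasing of the statement.
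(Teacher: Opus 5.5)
Your proposal is correct and follows the paper's route: linearity of $\qQQ$ in the pair $(p_{AB},p_{BA})$ gives (i), and (ii) uses two order-symmetric components with $M^{(1)}\neq M^{(2)}$ mixed with order-dependent weights. The paper takes the extreme choice $w^{AB}=(1,0)$, $w^{BA}=(0,1)$, giving $\qQQ=M^{(1)}-M^{(2)}$, and your repetition-rate instantiation is a harmless concrete special case of that construction.

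One small slip: your two-component formula has the sign reversed. Your own (correct) general expression $\sum_k (w_k^{AB}-w_k^{BA})M^{(k)}$ yields $\qQQ=(w-w')(M^{(1)}-M^{(2)})=(w-w')(r_2-r_1)S$, not $(w'-w)(\cdots)$. This does not affect the conclusion that $\qQQ\neq 0$.
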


Order-matched mixing thus provides one sufficient route by which
heterogeneous populations may preserve QQ satisfaction in aggregate;
it is consistent with the pooled estimand used for counterbalanced
label mappings (Section~\ref{sec:method}); and it marks
order-dependent routing (e.g., in mixture-of-experts decoding) as an
audit checkpoint --- a possible violation channel, not a necessity.

We now translate the CbD noncontextuality criterion into audit
coordinates. The relevant allowance is the total order sensitivity of
the marginals: define the \emph{order-sensitivity score} of an item
as $\OSS := |OE_A| + |OE_B|$.

\begin{corollary}[CbD criterion in audit coordinates]\label{cor:cbd}
For the rank-2 cyclic system with $\pm1$ coding, the
Kujala--Dzhafarov noncontextuality criterion \citep{kujala:2016} is
equivalent to
\begin{equation}
|\qQQ| \;\le\; \OSS .
\end{equation}
\end{corollary}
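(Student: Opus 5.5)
The plan is to instantiate the general Kujala--Dzhafarov criterion for cyclic systems of rank $n$ at $n=2$, and then rewrite each of its ingredients in the audit coordinates of Section~\ref{ss:setup}.

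\textbf{The two contexts.} The system has two contexts. Context $c_1$ is order $AB$, with random variables $A^{1},B^{1}$ distributed by $p_{AB}$. Context $c_2$ is order $BA$, with variables $B^{2},A^{2}$ distributed by $p_{BA}$. All answers are coded $y\mapsto +1$ and $n\mapsto -1$.

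\textbf{The general criterion.} For a cyclic system of rank $n$, the criterion states noncontextuality as
\[
s_{\mathrm{odd}}\bigl(\langle R_1R_2\rangle_{c_1},\dots,\langle R_nR_1\rangle_{c_n}\bigr)\le n-2+\Delta .
\]
Here $s_{\mathrm{odd}}(x_1,\dots,x_n)$ is the maximum of $\sum_i \pm x_i$ over sign choices with an odd number of minus signs. The quantity $\Delta$ is the sum over variables of $|\langle R_i^{c}\rangle-\langle R_i^{c'}\rangle|$, taken across the two contexts in which each variable appears.

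\textbf{Specialisation to $n=2$.} The constant term $n-2$ vanishes. With only two entries, the odd-sign maximum is $s_{\mathrm{odd}}(x_1,x_2)=\max(x_1-x_2,\,x_2-x_1)=|x_1-x_2|$. The criterion therefore reads
\[
\bigl|\langle A^1B^1\rangle-\langle A^2B^2\rangle\bigr|\le |\langle A^1\rangle-\langle A^2\rangle|+|\langle B^1\rangle-\langle B^2\rangle| .
\]

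\textbf{Translation into audit coordinates.} This uses two elementary identities for $\pm1$ variables.
\begin{itemize}
\item The product of the two answers is $-1$ exactly when they disagree. Hence $\langle A^1B^1\rangle=1-2M_{AB}$ and $\langle A^2B^2\rangle=1-2M_{BA}$. The left-hand side is then $2|M_{AB}-M_{BA}|=2|\qQQ|$.
\item $\langle A^{c}\rangle=2\,p^{c}(A_y)-1$ in each context. Hence $|\langle A^1\rangle-\langle A^2\rangle|=2|p_{AB}(A_y)-p_{BA}(A_y)|=2|OE_A|$, and likewise $|\langle B^1\rangle-\langle B^2\rangle|=2|OE_B|$.
\end{itemize}
The right-hand side is therefore $2\,\OSS$. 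Dividing both sides by $2$ gives $|\qQQ|\le\OSS$. Every step is an equality or an equivalence, so the resulting inequality is equivalent to the criterion, not merely implied by it.

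\textbf{Expected obstacle.} The step I expect to need care is the specialisation of the general criterion to $n=2$. I must confirm that the published criterion covers $n=2$ with the constant $n-2=0$. I must also fix the sign and labelling conventions for the cyclic correlations: in context $c_2$ the pair is written $(B,A)$, but the product $\langle B^2A^2\rangle$ is symmetric, so the ordering does not matter. Finally, $s_{\mathrm{odd}}$ with two arguments should yield the absolute difference rather than, say, $|x_1+x_2|$. I would settle this by quoting the criterion's statement in \citet{kujala:2016} and checking it on a deterministic example with no order effects ($\OSS=0$), where it must force $\qQQ=0$. The remaining steps are routine bookkeeping with $\pm1$ expectations.
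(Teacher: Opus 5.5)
Your proof is correct and follows the paper's argument. The same two identities, $\langle A^cB^c\rangle = 1-2M_c$ and $\langle X^c\rangle = 2p_c(X_y)-1$, turn the rank-2 Kujala--Dzhafarov inequality into $2|\qQQ|\le 2\,\OSS$; the only difference is that you derive the rank-2 form from the general rank-$n$ criterion via $s_{\mathrm{odd}}(x_1,x_2)=|x_1-x_2|$ and $n-2=0$, whereas the paper quotes the rank-2 specialization directly.
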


The mathematical content of the criterion is due
to~\citet{kujala:2016}; the contribution here is the translation into
audit coordinates (Appendix~\ref{app:proofs}). The inequality
supports a three-way reading: $\qQQ=0$ implies noncontextuality;
violations with $0<|q|\le\OSS$ are compatible with a noncontextual
coupling that allows direct influences --- the criterion does not
identify direct influence as the generating cause; and only
$|q|>\OSS$ certifies residual contextuality. $\OSS$ thereby functions
as an inconsistent-connectedness allowance.

The propositions above characterize specific mechanism families;
a general characterization of robust QQ satisfaction remains open.
Beyond the projective class, unsharp (POVM-based) measurement models
need not satisfy QQ~\citep{lebedev:2023}. Exploratory numerical
sweeps over qubit L\"uders instruments~\citep{luders:1950}, 
included in the released validation suite, are
consistent with this boundary and suggest a sharper conjecture:
symmetric unsharp instruments (noise level $\eta=0.3$; states and 
measurement axes swept) preserved QQ in all tested
configurations, while violations occurred in some tested asymmetric
variants. Further open problems include the representation question
(which observed kernel pairs admit a projective realization),
Bayesian-update boundaries (where standard belief-updating models
fall relative to the QQ surface), and higher-rank cyclic audits.

\subsection{Discriminant table}
\label{ss:discriminant}

Table~\ref{tab:discriminant} assembles the results of this section
into a two-layer interpretation aid for audit outcomes. The two
layers answer different questions and are evaluated independently.
The compatibility layer asks which mechanism families could have
produced an observed pattern, while the certification layer asks
whether the observed order dependence exceeds what noncontextual
couplings allow. An item therefore receives a certification verdict
whether or not any compatibility fit is attempted.

The compatibility layer maps diagnostic patterns to mechanism
classes that remain compatible with them. These statements are
non-identifying: Proposition~\ref{prop:symrep} establishes explicitly
that QQ satisfaction is not unique to projective mechanisms, while
the closed-form signatures likewise provide compatibility conditions
rather than converse identification results. The rows are neither
mutually exclusive nor exhaustive. 
The first row collects mechanism classes that satisfy QQ robustly, as
established by Theorems~\ref{thm:wb} and~\ref{thm:mi} and
Propositions~\ref{prop:symrep} and~\ref{prop:mixture}, together with
the order-matched mixing closure of Proposition~\ref{prop:mixing}.
For the mixing closure, robust QQ satisfaction is preserved when the
component mechanisms themselves satisfy QQ robustly and the mixture
weights are identical across the two orders.
The two closed-form rows summarize the polarity- and
order-dependent repetition signatures of
Proposition~\ref{prop:mixture}. Exercising either closed-form row
empirically requires an item family assumed to share the relevant
retention parameters, together with a pre-specified fitting and
rejection rule.

The certification layer applies the certification logic of
Corollary~\ref{cor:cbd} through the residual
\begin{equation*}
\Gamma \;:=\; |\qQQ| - \OSS .
\end{equation*}
In the adopted binary cyclic rank-2 CbD setting,
$\Gamma \le 0$ is equivalent to noncontextuality. Because the audit
observes $\Gamma$ only up to bounded measurement slack,
Section~\ref{sec:method} constructs certified bounds
$\Gamma_{\mathrm{lower}} \le \Gamma \le
\Gamma_{\mathrm{upper}}$. The certification layer then issues the
three-way verdict shown in the table: certified noncontextual when
$\Gamma_{\mathrm{upper}} \le 0$, certified residual contextuality
when $\Gamma_{\mathrm{lower}} > 0$, and indeterminate otherwise.

Section~\ref{sec:pilot} reports item-level QQ verdicts and applies
the certification layer of this table. The closed-form fits in the
compatibility layer were not part of the pilot's pre-specified
decision procedure, and no item family sharing the relevant retention
parameters was defined for them. The observed saturation would also
likely have limited the informativeness of such fits. The mechanisms
generating the observed violations therefore remain unresolved in
the pilot, while the closed-form rows remain theoretical diagnostic
branches for audits with sufficiently dispersed responses,
predefined shared-parameter families, and pre-specified fitting
criteria.

\begin{table}[t]
\centering\small
\begin{tabular}{p{0.43\linewidth}p{0.51\linewidth}}
\toprule
Diagnostic pattern & Compatibility or certification statement \\
\midrule
\multicolumn{2}{l}{\emph{Layer 1: mechanism compatibility}} \\
\midrule
QQ satisfaction across an item family &
marginal-independent mechanisms
($a{=}b{=}c{=}d$); projective mechanisms;
symmetric repetition and its cross-polarity, cross-order symmetric variant;
order-matched mixtures of robustly QQ-satisfying mechanisms\\
violations fitting
$q\sim(r_n{-}r_y)(p_A(y){-}p_B(y))$ &
polarity-dependent repetition \\
violations fitting
$q\sim(r_{BA}{-}r_{AB})S$ &
order-dependent repetition \\
\midrule
\multicolumn{2}{l}{\emph{Layer 2: CbD certification
(independent of Layer 1)}} \\
\midrule
certified $\Gamma_{\mathrm{upper}}\le 0$ &
certified noncontextual under the adopted rank-2 CbD criterion;
the generating mechanism is not identified \\
certified $\Gamma_{\mathrm{lower}}>0$ &
certified residual contextuality under the adopted rank-2
CbD criterion \\
$\Gamma_{\mathrm{lower}}\le 0
 < \Gamma_{\mathrm{upper}}$ &
indeterminate at the certification bounds \\
\bottomrule
\end{tabular}
\caption{Two-layer discriminant table. Layer~1 maps diagnostic
patterns to compatible mechanism classes
(Theorems~\ref{thm:wb} and~\ref{thm:mi}; Propositions~\ref{prop:symrep}, \ref{prop:mixture},
and~\ref{prop:mixing});
Layer~2 issues the CbD certification verdict of
Corollary~\ref{cor:cbd}, independently of Layer~1. In the closed
forms of Proposition~\ref{prop:mixture}, $r_y$ and $r_n$ are
polarity-dependent retention rates, $r_{AB}$ and $r_{BA}$ are
order-dependent retention rates, and
$S=p_A(y)p_B(n)+p_A(n)p_B(y)$.}
\label{tab:discriminant}
\end{table}

\section{Audit Methodology}\label{sec:method}

This section translates the theoretical distinctions above into a
reproducible audit pipeline for sequential judgments of LLMs. Its
core is a multi-turn forced-branch protocol that constructs the
sequential joint distributions of Section~\ref{ss:setup} from
next-token readouts under fixed implementation invariants. Around
this core, the pipeline controls label and unassigned-mass effects. 
It then checks both computational consistency and distributional
adequacy before QQ or contextuality verdicts are interpreted.
Figure~\ref{fig:pipeline} summarizes the pipeline and the outcomes each stage produced in this pilot.

\begin{figure}[!htbp]
\centering
\begin{tikzpicture}[
  font=\small,
  stage/.style={draw, rounded corners=2pt, align=left, text width=6.0cm,
                inner sep=6pt, fill=white},
  gatebox/.style={draw, thick, rounded corners=2pt, align=left,
                  text width=6.0cm, inner sep=6pt, fill=gray!10},
  badge/.style={draw, dotted, align=left, text width=4.4cm,
                inner sep=5pt, fill=gray!15, font=\footnotesize},
  auxbox/.style={draw, dashed, align=left, text width=2.9cm,
                 inner sep=5pt, font=\footnotesize},
  flow/.style={-{Stealth[length=2.4mm]}, thick},
  drop/.style={dotted, -{Stealth[length=2mm]}}
]
\node[stage] (reg) {\textbf{Pre-specified design:}
frozen manifest (hash-tracked); 18 item pairs (13 A/B, 5 Yes/No);
pre-specified rules and thresholds};
\node[stage, below=0.55cm of reg] (meas) {\textbf{Measurement:}
one framing per track: direct evaluation (Track~M) or
survey-respondent persona (Track~P); forced binary branching;
independent full-history forward passes; 12 positions per A/B item,
6 per Yes/No item};
\node[stage, below=0.55cm of meas] (rec) {\textbf{Joint reconstruction:}
branch-weighted token joints $p_A(\alpha)\,K_{AB}(\beta\,|\,\alpha)$,
both orders; semantic canonicalization; statistics $\qQQ$, $OE_A$,
$OE_B$, $\OSS$};
\node[gatebox, below=0.85cm of rec] (gates) {\textbf{Health gates:}
G1 labels single tokens (per track and label scheme); G2 unassigned mass (all
positions); G3 readout vs.\ sampling (pre-specified spot checks)};
\node[stage, below=0.55cm of gates] (diag) {\textbf{Diagnostics:}
mapping-specific and label diagnostics (descriptive);
counterbalanced pooling; saturation diagnostic
($\max(p,1-p)>0.99$)};
\node[stage, below=0.55cm of diag] (verd) {\textbf{Uncertainty and
verdicts:} unassigned mass to worst-case envelopes; QQ trichotomy
($\varepsilon_{\mathrm{QQ}}$ band);
$\Gamma=|\qQQ|-\OSS$ certification (discriminant table)};
\node[draw, dashed, rounded corners=3pt, inner sep=6pt,
      fit=(meas)(rec)] (band) {};
\node[font=\normalsize\itshape, rotate=90, anchor=south, align=center, text width=5.6cm]
      at ([xshift=-2pt]band.west)
      {invariants: fixed template, 1-token labels, no shared cache, batch 1};
\node[auxbox, left=0.7cm of gates] (halt)
      {on gate failure: halt, or restricted interpretation only};
\draw[flow, dashed] (gates.west) -- (halt.east);
\node[auxbox, left=0.7cm of diag] (memo)
      {auxiliary: memorization alert (name-substitution pairs)};
\draw[drop] (memo.east) -- (diag.west);
\node[badge, right=0.8cm of reg] (b1)
      {design frozen; item pairs and rules fixed before the runs they govern};
\node[badge, right=0.8cm of meas] (b2)
      {Track~M: 186 positions; Track~P: 96 positions};
\node[badge, right=0.8cm of rec] (b3)
      {full-precision joints stored in preserved audit records};
\node[badge, right=0.8cm of gates] (b4)
      {G1 passed (all labels); G2 max other mass $\sim 10^{-7}$;
       G3 agreement within Clopper--Pearson intervals (2 items)};
\node[badge, right=0.8cm of diag] (b5)
      {saturated 17/18 (M), 7/8 (P); label: 8/13 above 0.3
       (descriptive), 4 verdict changes; memorization alert not
       triggered (1/3)};
\node[badge, right=0.8cm of verd] (b6)
      {pooled: 13 V, 5 S, 0 I; no certified
       $\Gamma_{\mathrm{lower}}>0$; indeterminate: 2 (M), 3 (P)};
\draw[flow] (reg) -- (meas);
\draw[flow] (meas) -- (rec);
\draw[flow] (rec) -- (gates);
\draw[flow] (gates) -- (diag);
\draw[flow] (diag) -- (verd);
\foreach \s/\b in {reg/b1, meas/b2, rec/b3, gates/b4, diag/b5, verd/b6}
  \draw[drop] (\s.east) -- (\b.west);
\node[font=\Large\bfseries\itshape, anchor=south] at ([yshift=5pt]reg.north)
      {audit pipeline};
\node[font=\Large\bfseries\itshape, anchor=south] at ([yshift=5pt]b1.north)
      {pilot outcomes};
\end{tikzpicture}
\caption{The audit pipeline (left column) and the outcomes it
produced in this pilot (right column, dotted). Solid arrows give the
processing order. The dashed band marks the implementation invariants
enforced during measurement and reconstruction, the dashed branch
beside the health gates marks the pre-specified failure path, and the
memorization alert is an auxiliary diagnostic outside the main flow.
Gate denominators differ by design: G1 and G2 cover all items and
positions, while G3 is a pre-specified spot check on two items.
Abbreviations: V, VIOLATED; S, SATISFIED; I, INDETERMINATE.}
\label{fig:pipeline}
\end{figure}

\paragraph{Estimand.} The \emph{binary-conditioned audit estimand} is
$P(y\mid y\text{ or }n)=e^{\ell_y}/(e^{\ell_y}+e^{\ell_n})$, where
$\ell_y$ and $\ell_n$ are the next-token log-probabilities of the two
canonical single-token labels at the answer position. This estimand
differs from full-vocabulary generation; the gap is tracked through
the unassigned probability mass (the complement of the two labels'
total next-token probability), recorded at every position.

\paragraph{Multi-turn forced-branch protocol.}
Sequential structure is what separates this audit from single-shot
first-token measurement, and it is realized through a genuine
multi-turn conversation. For each ordered pair, the protocol proceeds
in four steps: (1) the first question is presented as a user turn and
its binary-conditioned answer distribution is read out; (2) \emph{forced branching}: each
candidate answer is committed separately to the conversation history
as an assistant turn containing exactly the corresponding canonical
label token; (3) the second question is presented as a further user
turn, and its answer distribution is measured anew under each
committed branch; and (4) steps (1)--(3) are repeated with the question
order reversed. Forced branching is essential because it evaluates
both possible first-answer histories, including branches with very
small natural probability, while weighting each branch by its measured
first-answer probability when reconstructing the joint distribution.

The joint is therefore not formed by multiplying independently
measured marginals. Such a construction imposes conditional
independence between the two answers and cannot represent the
history-mediated dependence targeted by the audit. Nor is the exchange
concatenated into a single user prompt, because doing so changes the
chat-template serialization, the conversational roles, and hence the
measured conditional distribution. 
Let $H_{A_\alpha}$, $\alpha\in\{y,n\}$, denote the serialized
conversation history in which the first question $A$ is followed by
an assistant turn committing the canonical label token for answer
$\alpha$, and define $H_{B_\beta}$ analogously for the reverse
order. The branch-specific readouts are precisely the
transition-kernel entries introduced in Section~\ref{ss:setup},
\begin{equation}\label{eq:forced-branch-kernels}
P(B_\beta \mid H_{A_\alpha}) = K_{AB}(\beta\,|\,\alpha),
\qquad
P(A_\alpha \mid H_{B_\beta}) = K_{BA}(\alpha\,|\,\beta).
\end{equation}
The sequential joints are then assembled as in
Section~\ref{ss:setup}:
\begin{equation}\label{eq:sequential-joint-assembly}
p_{AB}(\alpha,\beta)=p_A(\alpha)\,K_{AB}(\beta\,|\,\alpha),
\qquad
p_{BA}(\beta,\alpha)=p_B(\beta)\,K_{BA}(\alpha\,|\,\beta).
\end{equation}
Committing the first answer to the history is the operational analogue
of the state update presupposed by the sequential mechanisms of
Section~\ref{sec:theory}. Without branch-conditioned multi-turn
histories, the target sequential joint distributions cannot be
measured under this audit, and the resulting order effects and QQ
statistics would not correspond to the intended sequential process.

\paragraph{Implementation invariants.} Four conditions are held fixed
across every measured position, and the audit's validity rests on
them. (1) A single chat template renders every conversation: all
branches, orders, mappings, and framings (the direct-evaluation and
survey-respondent persona instructions; Appendix~\ref{app:system-prompts})
are serialized by the same
template, whose hash is stored in the execution records, so that
measured differences cannot be attributed to changes in the
chat-template implementation. (2) Each canonical label is exactly one
token in the rendered context (enforced as Gate G1 below), and no
hidden generation prefix is active (thinking mode disabled), so that
the committed label token is the entire assistant turn and both
canonical answer probabilities are obtained from a single next-token
distribution. (3) Token-level results under each label
mapping are canonicalized back to the semantic option coordinates
before any pooling or comparison, so that all reported quantities
refer to semantic options rather than to the neutral tokens.
(4) Every position-level logprob readout is computed by an independent
forward evaluation of its full serialized history. Each forward
evaluates a single sequence (batch size one), and no state, cache, or
continuation is shared across measured positions, branches, or orders.
The rendered conversation format, including the per-turn label
legend, is reproduced in Appendix~\ref{app:presentation}.

\paragraph{Label counterbalancing.} Original survey wordings and
response options are preserved; options map to neutral tokens A/B
under \emph{both} assignments (map-1 and its swap, map-2), each applied identically
across orders and protocols, with the active mapping displayed as a
two-line legend in every user turn (Appendix~\ref{app:presentation}). 
Results are canonicalized back to semantic options
before pooling with equal weights. Pooling is justified by the linearity
of $\qQQ$ in the joint pair together with the pre-specified equal-weight
estimand; Proposition~\ref{prop:mixing} additionally clarifies how such
order-matched mixtures preserve QQ structure. Per-mapping results are
reported as label-effect diagnostics (the two mappings are experimental
conditions, not latent population components).

\paragraph{Decision rules (pre-specified; frozen before execution and
audit-logged).} 
Unassigned probability mass (``other'') is handled by a
worst-case robustness envelope. For a single label mapping,
$\qQQ$, $OE_A$, and $OE_B$ are multilinear in six
mass-allocation variables --- in each order, one at the
first-question position and one at each of the two
branch-conditioned second-question positions. Their exact component
ranges over the allocation box are therefore attained on its $2^6$
vertices. For the counterbalanced S1/S2 items, this enumeration is
performed separately after semantic canonicalization for each of the
two mappings. Because the mappings have independent allocation variables and the
pooled joint distributions use equal weights, the exact pooled range
of each linear audit component $X\in\{\qQQ,OE_A,OE_B\}$ has
endpoints
\begin{equation*}
X^{\mathrm{pool}}_{\min}
= \tfrac{1}{2}\bigl(X^{(1)}_{\min}+X^{(2)}_{\min}\bigr),
\qquad
X^{\mathrm{pool}}_{\max}
= \tfrac{1}{2}\bigl(X^{(1)}_{\max}+X^{(2)}_{\max}\bigr),
\end{equation*}
where $X^{(k)}_{\min}$ and $X^{(k)}_{\max}$ denote the envelope
endpoints of $X$ computed under mapping $k$.
For S3, which uses a single Yes/No mapping, the single-mapping
$2^6$ envelope applies directly.
The equivalence decision is taken over the envelope: SATISFIED if
the entire envelope lies within the practical-equivalence band
$|\qQQ|\le\varepsilon_{\mathrm{QQ}}=0.02$, VIOLATED if it lies entirely
outside, and INDETERMINATE if it straddles the boundary (all with a
floating-point tolerance).
The margin $\varepsilon_{\mathrm{QQ}}$ was fixed a priori as a
practical-equivalence band wide enough to cover approximate
satisfaction by partial-revision processes, for which simulations in
the Supporting Information of \citet{wang:2014:pnas} report $|q|<0.015$.

\paragraph{Certified $\Gamma$ bounds.}
The final certification estimand is constructed from the semantically
canonicalized pooled audit components, rather than by averaging
mapping-specific values of $\Gamma$ or mapping-specific certification
verdicts. For counterbalanced S1/S2 items,
\begin{equation*}
\qQQ^{\mathrm{pool}}
=\tfrac{1}{2}\bigl(\qQQ^{(1)}+\qQQ^{(2)}\bigr),
\qquad
OE_A^{\mathrm{pool}}
=\tfrac{1}{2}\bigl(OE_A^{(1)}+OE_A^{(2)}\bigr),
\qquad
OE_B^{\mathrm{pool}}
=\tfrac{1}{2}\bigl(OE_B^{(1)}+OE_B^{(2)}\bigr).
\end{equation*}
Because the absolute-value operations do not commute with pooling,
\begin{equation*}
\Gamma^{\mathrm{pool}}
=
|\qQQ^{\mathrm{pool}}|
-|OE_A^{\mathrm{pool}}|
-|OE_B^{\mathrm{pool}}|
\neq
\tfrac{1}{2}\bigl(\Gamma^{(1)}+\Gamma^{(2)}\bigr)
\end{equation*}
in general.

Let $I_q$, $I_A$, and $I_B$ denote the exact pooled component
envelopes for $\qQQ$, $OE_A$, and $OE_B$, respectively; for S3,
they denote the corresponding exact single-mapping envelopes. For an
interval $I=[\ell,u]$, define
\begin{equation*}
L_{\mathrm{abs}}(I)
=
\begin{cases}
0, & \ell\le 0\le u,\\
\min\{|\ell|,|u|\}, & \text{otherwise},
\end{cases}
\qquad
U_{\mathrm{abs}}(I)
=
\max\{|\ell|,|u|\}.
\end{equation*}
Because
$\Gamma=|\qQQ|-\OSS=|\qQQ|-|OE_A|-|OE_B|$
is only piecewise multilinear, certified bounds are assembled from
these component envelopes as
\begin{align*}
\Gamma_{\mathrm{lower}}
&=
L_{\mathrm{abs}}(I_q)
-U_{\mathrm{abs}}(I_A)
-U_{\mathrm{abs}}(I_B),\\
\Gamma_{\mathrm{upper}}
&=
U_{\mathrm{abs}}(I_q)
-L_{\mathrm{abs}}(I_A)
-L_{\mathrm{abs}}(I_B).
\end{align*}
These bounds are sound but may be conservative because the dependence
among the three component envelopes is not retained. They are the
$\Gamma$-envelope bounds consumed by the certification layer of the
discriminant table (Section~\ref{ss:discriminant}): an item is
certified noncontextual when $\Gamma_{\mathrm{upper}}\le 0$,
certified residually contextual when
$\Gamma_{\mathrm{lower}}>0$, and indeterminate otherwise.

\paragraph{Saturation diagnostic (pre-specified).} An item is
\emph{saturated} if more than half of its measured positions have
$\max(p,1-p)>0.99$. Saturated items carry little distributional
information; their QQ verdicts are reported but weakly interpretable.

\paragraph{Health gates.} Three gates screen technical failure modes
before any verdict is interpreted. Gate G1 requires the canonical
labels to be single tokens in the rendered chat-template context, so
that both canonical answer probabilities can be read from a single 
next-token distribution. 
G1 was validated once per track and label scheme in a
representative rendered context. Because the answer-position
template boundary is identical across items within each track and label scheme, the
validation applies to every item.
Gate G2 requires the unassigned mass to stay below $10^{-2}$ at 
every position, screening gross mass leakage; the decision-relevant
quantity is the resulting envelope width, which is reported with the
results. 
Gate G3 cross-checks the logprob readout against trajectory-level
rejection sampling by exact Clopper--Pearson
intervals~\citep{clopper:1934} with Bonferroni correction (per
item-order, $\alpha=0.01$), guarding against readout--generation
divergence; the method was selected after an observed simultaneous
undercoverage of Wilson intervals ($0.980$ vs $0.995$ at $n=200$ in
calibration). 
It is run as pre-specified spot checks rather than exhaustively, on
items fixed in advance from the frozen manifest to cover both
canonical-label schemes (Appendix~\ref{app:item-manifest}). Under the
direct-evaluation framing, the check covered items s1-01 and s3-01
--- one per label scheme (A/B and Yes/No, respectively); under the
survey-respondent persona framing, which applies only to A/B items,
item s1-01. In every case both presentation orders were checked, with
200 accepted joint samples collected per item and order.

The original spot checks used batched sampling (batch size 64). A
subsequent implementation audit on a later software stack found
batch-size-dependent generation distortion (in a fixed-input
reproduction test, single-sequence generation matched the forward
pass while larger batches did not). As a conservative recheck, all
six pre-specified item-order checks (four in Track~M, two in Track~P) were
re-executed at batch size one using the pilot model and the original protocol of each track,
and every verdict was preserved; the retrospective records are
preserved alongside the original audit records.

\paragraph{Software validation.} The mathematical core of the audit software
was validated against ground-truth mechanisms
(projective, repetition families, L\"uders instruments), recovering
$q=0$ at machine precision and all closed-form violations exactly; the
full software assertion suite ships with the code as part of the
reproducibility materials.

\section{Empirical Pilot}\label{sec:pilot}

This section applies the audit pipeline to a first-signal pilot on an
instruction-tuned LLM, evaluating both the technical reliability of the
procedure and whether the resulting response distributions are adequate
for meaningful QQ and CbD analysis.

\paragraph{Setup.} All measurements were taken on
Qwen3-4B-Instruct-2507~\citep{yang:2025:qwen3}, run in bfloat16 (bf16) in
non-thinking mode; this is a single-model study, and no cross-model
generalization is claimed (see Section~\ref{sec:limitations}). The item set consists of
18 question pairs: 10 historical survey pairs with original wordings
and response options, drawn from \citet{wang:2014:pnas} and related
classic survey sources (Gallup, Pew, SRC), forming stratum S1;
3 name-swapped memorization controls, forming stratum S2; and 5 novel judgment pairs
answered with Yes/No labels, forming stratum S3. Two framings were tested, labeled
Track~M and Track~P, keeping the names used in the pre-specified protocol and its audit
records. Track~M applies a direct evaluation instruction; 
Track~P applies a survey-respondent persona with a per-item reference
year. The persona requires an original survey year, so Track~P covers
only the 8 year-fixed items in strata S1 and S2; it was run once,
with the framing redesigned and all other elements of the protocol
held fixed, and its full specification was frozen before execution.
We use M and P as shorthand in tables and
figures; the full prompts and the item manifest are given in
Appendix~\ref{app:materials}. 
Measurement was deterministic at the logprob level; the sampling
checks drew tokens with temperature~1 and top-$p$~1, i.e., from the
unmodified model distribution that the logprob readout records.
Every run writes a preserved, hash-tracked audit
record containing the model and tokenizer revision fields, the chat-template
hash, label token ids, mapping assignment, seeds, and rejection
attempt and accept counts.

\paragraph{Gate results.} Gates G1 and G2 passed in every run, and
gate G3 passed in all pre-specified spot checks: the maximum other
mass was $\sim2\times10^{-7}$, and the consistency checks passed
without retries. 
The gate results reduce concern that the patterns reported below
arise from basic, pre-specified failures of the measurement chain.
What follows therefore concerns the adequacy of the
model--prompt--item--estimand combination for measuring
population-like dispersed response distributions, rather than the
correctness of the implementation as screened by the gates.

\paragraph{Results.} Table~\ref{tab:trackm} reports the pooled
Track~M verdicts for all 18 items; Table~\ref{tab:mp} compares the
eight items measured under both framings; Table~\ref{tab:label}
tabulates the per-mapping label-effect diagnostics, and
Table~\ref{tab:s107-joint} shows the order-conditioned joint
distributions of one illustrative item; Figure~\ref{fig:saturation}
visualizes the position-level saturation, and Figure~\ref{fig:label}
the label effect. Five findings follow, then the main methodological
conclusion they support.

\begin{table}[!htbp]
\centering\small
\setlength{\tabcolsep}{10pt}
\begin{tabular}{lrrrrllc}
\toprule
id & $OE_A$ & $OE_B$ & $\OSS$ & $\qQQ$ & QQ & $\Gamma$ class & sat \\
\midrule
s1-01 & $-0.280$ & $+0.462$ & 0.742 & $+0.182$ & VIOLATED & NONCONTEXTUAL & Y \\
s1-02 & $+0.482$ & $+0.498$ & 0.981 & $+0.981$ & VIOLATED & \textbf{INDETERMINATE} & Y \\
s1-04 & $+0.216$ & $-0.190$ & 0.406 & $+0.001$ & SATISFIED & NONCONTEXTUAL & Y \\
s1-05 & $-0.001$ & $-0.731$ & 0.733 & $-0.730$ & VIOLATED & NONCONTEXTUAL & Y \\
s1-06 & $-0.007$ & $+0.002$ & 0.009 & $-0.006$ & SATISFIED & NONCONTEXTUAL & Y \\
s1-07 & $-0.387$ & $+0.387$ & 0.774 & $-0.000$ & SATISFIED & NONCONTEXTUAL & Y \\
s1-09 & $-0.131$ & $-0.178$ & 0.309 & $-0.309$ & VIOLATED & NONCONTEXTUAL & Y \\
s1-10 & $+0.001$ & $+0.024$ & 0.025 & $-0.023$ & VIOLATED & NONCONTEXTUAL & Y \\
s1-11 & $-0.346$ & $-0.395$ & 0.741 & $-0.049$ & VIOLATED & NONCONTEXTUAL & Y \\
s1-12 & $+0.005$ & $+0.000$ & 0.006 & $+0.005$ & SATISFIED & NONCONTEXTUAL & Y \\
s2-01 & $+0.493$ & $-0.311$ & 0.804 & $+0.182$ & VIOLATED & NONCONTEXTUAL & Y \\
s2-02 & $-0.000$ & $+0.000$ & 0.000 & $+0.000$ & SATISFIED & NONCONTEXTUAL & Y \\
s2-09 & $-0.051$ & $-0.544$ & 0.595 & $+0.468$ & VIOLATED & NONCONTEXTUAL & Y \\
s3-01 & $+0.000$ & $-0.562$ & 0.562 & $+0.562$ & VIOLATED & \textbf{INDETERMINATE} & Y \\
s3-03 & $-0.007$ & $+0.493$ & 0.500 & $+0.500$ & VIOLATED & NONCONTEXTUAL & Y \\
s3-08 & $+0.378$ & $+0.991$ & 1.369 & $-0.614$ & VIOLATED & NONCONTEXTUAL & Y \\
s3-09 & $-0.044$ & $-0.731$ & 0.775 & $+0.687$ & VIOLATED & NONCONTEXTUAL & Y \\
s3-11 & $-0.924$ & $-0.034$ & 0.959 & $-0.890$ & VIOLATED & NONCONTEXTUAL & N \\
\bottomrule
\end{tabular}
\caption{Track~M pooled results (18 items). $OE_A$ and $OE_B$ are the marginal order effects entering $\OSS=|OE_A|+|OE_B|$, with signs as defined in Section~\ref{ss:setup}. The sat column marks item-level saturation (Y/N) under the pre-specified rule of Section~\ref{sec:method}. S1/S2 use A/B-mapped original options; S3 uses Yes/No --- metrics are not compared numerically across strata. Envelope intervals are indistinguishable from point values at the displayed precision (other mass $\sim10^{-7}$). Classifications use full-precision certified bounds; displayed values are rounded.} 
\label{tab:trackm}
\end{table}

\begin{table}[!htbp]
\centering\small
\setlength{\tabcolsep}{11pt}
\begin{tabular}{llllllc}
\toprule
id & year & sat M$\to$P & $\qQQ$ M$\to$P & $\OSS$ M$\to$P & QQ (P) & $\Gamma$ (P) \\
\midrule
s1-01 & 1997 & Y$\to$Y & $+0.18\to-0.57$ & $0.74\to0.57$ & VIOLATED & NC \\
s1-02 & 1995 & Y$\to$Y & $+0.98\to+0.11$ & $0.98\to0.11$ & VIOLATED & \textbf{IND} \\
s1-04 & 1996 & Y$\to$Y & $+0.00\to+0.65$ & $0.41\to0.65$ & VIOLATED & NC \\
s1-06 & 1979 & Y$\to$Y & $-0.01\to-1.00$ & $0.01\to1.00$ & VIOLATED & NC \\
s1-11 & 2008 & Y$\to$N & $-0.05\to-0.24$ & $0.74\to0.89$ & VIOLATED & NC \\
s1-12 & 2012 & Y$\to$Y & $+0.01\to+0.00$ & $0.01\to0.00$ & SATISFIED & \textbf{IND} \\
s2-01 & 1997 & Y$\to$Y & $+0.18\to-1.00$ & $0.80\to1.00$ & VIOLATED & \textbf{IND} \\
s2-02 & 1995 & Y$\to$Y & $+0.00\to+0.01$ & $0.00\to0.47$ & SATISFIED & NC \\
\bottomrule
\end{tabular}
\caption{Track~M vs Track~P, 8 year-fixed pairs. Under Track~P no
item had certified $\Gamma_{\mathrm{lower}}>0$ (5 certified
noncontextual (NC), 3 indeterminate (IND) at the certification bounds). Classifications use full-precision certified bounds; displayed values are rounded.}
\label{tab:mp}
\end{table}

\begin{table}[!htbp]
\centering\small
\setlength{\tabcolsep}{11pt}
\begin{tabular}{lrrrrrrcc}
\toprule
id & $\qQQ^{(1)}$ & $\qQQ^{(2)}$ & $|\Delta\qQQ|$ & $\OSS^{(1)}$ &
$\OSS^{(2)}$ & $|\Delta\OSS|$ & QQ$^{(1)}$ & QQ$^{(2)}$ \\
\midrule
s1-01 & $-0.560$ & $+0.924$ & 1.484 & 0.560 & 0.924 & 0.364 & V & V \\
s1-02 & $+0.963$ & $+0.998$ & 0.036 & 0.963 & 0.998 & 0.036 & V & V \\
s1-04 & $+0.000$ & $+0.003$ & 0.003 & 0.006 & 0.818 & 0.812 & S & S \\
s1-05 & $-0.962$ & $-0.498$ & 0.465 & 0.963 & 0.502 & 0.460 & V & V \\
s1-06 & $-0.014$ & $+0.002$ & 0.016 & 0.014 & 0.004 & 0.010 & S & S \\
s1-07 & $-0.000$ & $+0.000$ & 0.000 & 1.358 & 0.189 & 1.169 & S & S \\
s1-09 & $-0.618$ & $+0.000$ & 0.618 & 0.618 & 0.004 & 0.614 & V & S \\
s1-10 & $-0.046$ & $-0.000$ & 0.045 & 0.049 & 0.000 & 0.049 & V & S \\
s1-11 & $-0.098$ & $+0.001$ & 0.099 & 1.456 & 0.026 & 1.431 & V & S \\
s1-12 & $+0.000$ & $+0.011$ & 0.011 & 0.000 & 0.011 & 0.011 & S & S \\
s2-01 & $+0.363$ & $-0.000$ & 0.363 & 1.562 & 0.046 & 1.516 & V & S \\
s2-02 & $+0.000$ & $+0.000$ & 0.000 & 0.001 & 0.000 & 0.001 & S & S \\
s2-09 & $-0.024$ & $+0.959$ & 0.983 & 0.215 & 0.976 & 0.762 & V & V \\
\bottomrule
\end{tabular}
\caption{Label-effect diagnostics for the 13 A/B items under
Track~M. Superscripts (1) and (2) denote map-1 and map-2, the two
counterbalanced label assignments. The last two columns give the
mapping-specific QQ verdicts (V, VIOLATED; S, SATISFIED); for four
items (s1-09, s1-10, s1-11, s2-01) the verdict changes across
mappings. S3 items use a single Yes/No arrangement and have no
mapping pair. All columns are rounded independently from
full-precision values, so displayed differences may deviate from
differences of displayed values in the last digit.}
\label{tab:label}
\end{table}

\begin{table}[!htbp]
\centering\small
\setlength{\tabcolsep}{12pt}
\begin{tabular}{lrrrr}
\toprule
order & $p(y,y)$ & $p(y,n)$ & $p(n,y)$ & $p(n,n)$ \\
\midrule
$AB$ & 0.387 & 0.000 & 0.000 & 0.613 \\
$BA$ & 0.000 & 0.000 & 0.000 & 1.000 \\
\bottomrule
\end{tabular}
\caption{Pooled joint answer distributions of item s1-07 under the two question
orders.  Cells are indexed by (first answer, second answer) within each order,
so the $AB$ row reads $(\alpha,\beta)$ and the $BA$ row $(\beta,\alpha)$, with
$\alpha$ the answer to $A$ and $\beta$ the answer to $B$.  Under order $BA$ the
distribution is fully deterministic at $(n,n)$, while under order $AB$ a $(y,y)$
cell of mass $0.387$ opens.  Reading off the identity in the text, the
joint-cell term $2\,[\,p_{AB}(y,y)-p_{BA}(y,y)\,]=0.77$ matches the marginal
term $OE_B-OE_A=0.77$, producing $q\approx0$. Cell values are rounded from full
precision.}
\label{tab:s107-joint}
\end{table}

\begin{figure}[!htbp]
\centering
\includegraphics[width=0.85\linewidth]{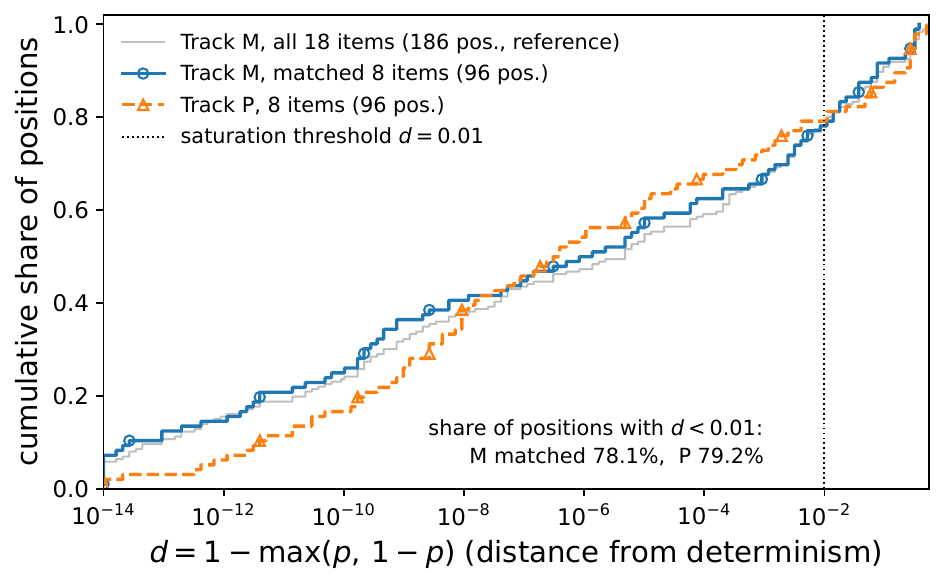}
\caption{Empirical cumulative distribution function (CDF) of the
distance from determinism at each measured position, shown on a log
scale. For a position with binary-conditioned probability $p$, the
distance is $d = 1-\max(p,\,1-p)$: the probability of the minority
answer, or equivalently the total-variation distance to the nearest
deterministic distribution, so that $d=0.5$ is maximal dispersion and
$d\to 0$ is saturation. The dotted line marks the saturation
threshold $d=0.01$, equivalent to the rule
$\max(p,1-p)>0.99$ of Section~\ref{sec:method}. The primary
comparison is between the matched eight-item subset under Track~M
(solid blue, circles) and Track~P (dashed orange, triangles), 96
positions each; the full 18-item Track~M set (gray) is a reference.
On the matched subset, a similar share of positions lies below the
threshold under both framings ($78.1\%$ vs $79.2\%$): the
respondent-persona framing shifted how deeply positions saturate ---
the curves differ at intermediate scales --- but did not
substantially reduce the share of saturated positions. Positions with
$d=0$ are clipped to $10^{-14}$ for display.}
\label{fig:saturation}
\end{figure}

\begin{figure}[!htbp]
\centering
\includegraphics[width=0.6\linewidth]{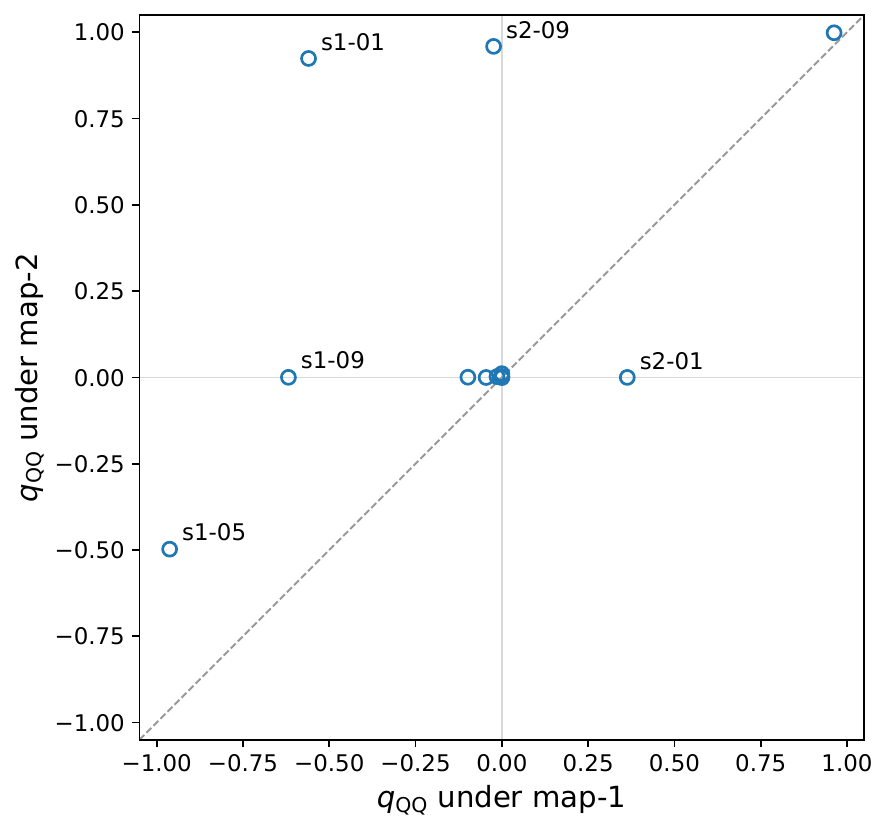}
\caption{Label effect under counterbalancing: per-item $\qQQ$ under map-1 vs map-2 (13 A/B items, Track~M). Points on the dashed diagonal indicate mapping-invariant behavior; annotated items deviate by more than $0.3$, showing that in the saturated regime the A/B token assignment has a substantive effect on the measured $\qQQ$.}
\label{fig:label}
\end{figure}

\begin{enumerate}
\item \emph{Saturation dominates, and persists under persona framing.}
Under Track~M (direct evaluation), 17 of the 18 items were saturated. 
The persona framing of Track~P
moved individual values dramatically --- s1-06's $q$, for example,
shifted from $-0.01$ to $-1.00$, the theoretical extremum --- but it
did not open the distributions (Figure~\ref{fig:saturation}): 7 of 8 
items remained saturated, with only s1-11 de-saturating (Table~\ref{tab:mp}).
We therefore report that saturation \emph{persisted} under respondent-persona
framing within the tested conditions, rather than claiming strict
invariance, since one item de-saturated and values moved substantially.
The verdict is insensitive to the saturation-threshold choice: at
thresholds $0.95$, $0.975$, and $0.99$ on $\max(p,1-p)$
(equivalently $d=0.05$, $0.025$, and $0.01$ in
Figure~\ref{fig:saturation}), the saturated-item counts are $18/18$,
$18/18$, and $17/18$ under Track~M, and $7/8$ throughout under Track~P.

\item \emph{The QQ verdicts conceal distinct saturated-regime
structures and mapping effects.} The pooled Track~M verdicts are 13
VIOLATED, 5 SATISFIED, and 0 INDETERMINATE, but these counts do not
represent a single underlying response mechanism. Three recurring
$(|q|,\OSS)$ patterns are visible in Table~\ref{tab:trackm}. Near-null items have
$|q|\approx\OSS\approx0$ (e.g., s1-06, s1-12, s2-02), indicating
little marginal order sensitivity and little QQ imbalance.
Near-boundary items have $|q|\approx\OSS$ (e.g., s1-05, s3-03,
s3-11), so the observed QQ imbalance is closely matched by the
magnitude of the marginal order effects and leaves little or no
residual excess beyond the noncontextual boundary. 
In contrast, large-order-effect items with small QQ imbalance have
$|q|\ll\OSS$ (s1-04, s1-07, s1-11), with s3-08 an intermediate case
($|q|/\OSS=0.45$). In these items substantial marginal shifts
coexist with a small QQ imbalance, and the two SATISFIED
cases (s1-04, s1-07) show that QQ satisfaction does not imply the
absence of order effects.
Cutting across these patterns, label-discordant items exhibit sharply different
mapping-specific values (Table~\ref{tab:label}, Figure~\ref{fig:label}), 
so their pooled statistics can be dominated
by mixture artifacts rather than by a stable semantic effect. For
example, s1-01 gives $q=-0.56$ under map-1 and $+0.92$ under map-2,
pooling to $+0.18$, with s2-01 and s2-09 showing similar discordance.
Thus, the 13 pooled QQ violations should not be interpreted as 13
instances of a common mechanism, and still less as 13 detections of
residual contextuality.

\item \emph{Item s1-07 separates order sensitivity, QQ imbalance,
and residual contextuality.} As shown in
Table~\ref{tab:trackm}, its marginal order effects are large and
opposite ($OE_A=-0.39$ and $OE_B=+0.39$), yet it satisfies QQ with
$q\approx0$ despite $\OSS=0.77$. This contrast arises because $q$ is
not determined by the marginal order effects alone. 
In each order the two answers disagree unless
they coincide, so the disagreement probability decomposes as
\[
M_{XY} = p_{XY}(A_y) + p_{XY}(B_y) - 2\,p_{XY}(y,y),
\]
the marginal masses counting the $(y,y)$ cell twice. Subtracting the
two orders and applying the sign conventions of
Section~\ref{ss:setup} gives the exact identity
\[
q = OE_B - OE_A - 2\,\bigl[p_{AB}(y,y) - p_{BA}(y,y)\bigr].
\]
For s1-07, the marginal contribution is
$OE_B-OE_A=0.77$, whereas the joint-cell contribution is
$2(p_{AB}(y,y)-p_{BA}(y,y))\approx0.77$. The two terms therefore offset one another,
yielding $q\approx0$. The cancellation occurs between the marginal
and joint-distribution contributions, not between $OE_A$ and $OE_B$.
This structure cannot be inferred from the marginal effects alone and
becomes visible only at the level of the full $2\times2$ joint
distributions (Table~\ref{tab:s107-joint}).

At the level of the QQ constraint, the observed satisfaction does not
exclude the projective class, but it does not establish that the full
joint pair admits a projective realization. In the saturated regime,
near-deterministic response patterns can produce the same QQ signature,
so the verdict alone neither identifies the generating mechanism nor
certifies contextuality.
Indeed, the discriminant analysis places s1-07 well inside the noncontextual
region, with $\Gamma_{\mathrm{upper}}=-0.77$. Thus, order sensitivity, QQ
imbalance, and residual contextuality are distinct properties: an item may
exhibit substantial order effects, satisfy QQ, and remain certifiably
noncontextual at the same time.

\item \emph{Label assignment has a substantive effect.} Across the
counterbalanced mappings, 8 of the 13 A/B items show $|\Delta\qQQ|$
or $|\Delta\OSS|$ above a descriptive threshold of $0.3$, with
per-item $\qQQ$ differing by up to 1.48 (s1-01) and $\OSS$ by up to
1.52 (s2-01), and for 4 items (s1-09, s1-10, s1-11, s2-01) the
mapping-specific QQ verdict changes between VIOLATED and SATISFIED
(Table~\ref{tab:label}). Figure~\ref{fig:label} visualizes the
mapping effect on $\qQQ$. In the saturated regime the A/B label
assignment can reverse near-deterministic binary choices at
individual measured positions. Counterbalancing with semantic
canonicalization is therefore not optional for audits of this kind,
consistent with the option-order and token-label biases reported in
single-shot settings~\citep{zheng:2024:llm-mcq-bias,pezeshkpour:2024:llm-mcq-sens}.

\item \emph{The memorization alert was not triggered.} The
pre-specified rule raises an alert if $|\OSS_{S1}-\OSS_{S2}|>0.3$ on
at least 2 of the 3 matched S1/S2 pairs; this occurred on only 1 of 3
(s1-02/s2-02), so the alert was not triggered. This check is
descriptive; comparison against human joint benchmarks remains
pending.
\end{enumerate}

\paragraph{Main finding (methodological).} Under the tested
instruction-tuned model and prompting conditions, forced-binary
next-token log-probabilities (i) \emph{revealed substantial order and
label sensitivity}, since the deterministic flips and label-mapping
effects above are observed properties of the binary-conditioned
estimand; (ii) \emph{were inadequate for measuring population-like
dispersed response distributions}, since the binary-conditioned
distribution was nearly deterministic under both framings; and (iii)
\emph{were consequently only weakly informative for fine-grained
mechanism discrimination and residual-contextuality detection}, since
saturation left distinct generating mechanisms observationally
compatible with the same verdicts, and mapping dependence made pooled
values unstable as evidence of any single semantic effect. 
The audit pipeline itself provided the intended diagnostics. The
health gates reduced concern that the result arose from the
pre-specified basic failure modes of the measurement chain.
Counterbalancing exposed label effects that a single-mapping audit
would have missed, the pre-specified saturation diagnostic converted
an otherwise overinterpretable result into an explicit measurement
caveat, and the combined QQ--CbD audit coordinates separated order
sensitivity, QQ imbalance, and residual contextuality on individual
items.
As a general (not universally proven) methodological warning, we recommend a
pre-specified saturation diagnostic as a standard health check for any study
treating LLM next-token distributions as survey-response distributions.

\FloatBarrier
\section{Discussion}

This section interprets the pilot findings, relating them to
LLM-as-respondent research, weighing candidate explanations for the
observed saturation, and drawing out what the results imply for QQ
and CbD audits. It then turns to design, outlining measurement
approaches that may yield more informative response distributions and
identifying which components of the audit engine transfer beyond this
pilot.

\paragraph{Relation to LLM-as-respondent studies.} 
Studies that aggregate LLM ``response distributions'' from forced-choice
next-token measurements implicitly assume dispersion. Our results
show that for a modern instruction-tuned model this assumption can
fail for nearly all tested items at the level of a single model under
fixed prompting conditions, with apparent distributional effects
driven by deterministic flips and label-assignment effects. This
sharpens, at the distribution level, the first-token critique of
\citet{wang:2024}.

\paragraph{Why saturation may arise.}
The pilot was not designed to identify the cause of the observed
saturation, and several candidate explanations remain observationally
compatible with the data. Instruction-following post-training may
encourage committed, low-entropy responses to judgment questions,
consistent with reports
that preference-based post-training reduces output
diversity~\citep{kirk:2024:rlhf-diversity,padmakumar:2024:content-diversity}
and can degrade the calibration of answer-token
probabilities~\citep{tian:2023:calibration}, even though pretrained
models are often well calibrated on comparable multiple-choice
formats~\citep{kadavath:2022:know}.
The single-token A/B labels may also carry prior asymmetries, which
counterbalancing reveals and averages over but does not eliminate
within individual mappings. Forced-binary conditioning can further
sharpen the reported distribution: renormalization preserves the odds
between the two labels but magnifies their absolute probability
separation whenever substantial probability mass lies outside the
two-token set. 
In this pilot the unassigned mass was negligible
($\sim 2\times10^{-7}$, Section~\ref{sec:pilot}), so this channel
contributed little here, though it remains relevant for formats with
weaker label anchoring.
Finally, particular item--prompt combinations may
elicit genuinely strong semantic preferences that would appear
near-deterministic across multiple response formats. These
explanations are not mutually exclusive. Distinguishing among them
requires factorial designs that vary post-training stage, response-label
format, and conditioning scheme. The present pilot held these factors
fixed, counterbalanced only the A/B label assignment, and varied item
content and two prompt framings.

\paragraph{What a QQ audit needs instead.}
The theory clarifies both an opportunity and a hazard. Human
aggregates are mixtures of individuals, and if individual-level
behavior satisfies QQ, Proposition~\ref{prop:mixing} explains why
aggregate satisfaction is preserved under order-matched mixing with
the same component weights across question orders. The reverse
implication does not hold, which matters for measurement design.
Different interventions may yield or expose numerical dispersion, but
they do not produce the same kind of variability. Persona or model
ensembles can introduce between-component heterogeneity, providing an
operational analogue of population heterogeneity in human surveys.
Model variants such as base checkpoints may expose different response
regimes but do not themselves define a population unless combined
under an explicit mixture design. By contrast, repeated sampling from
a single model produces within-component stochastic variability. Such
sampling may yield a dispersed empirical distribution, but it does not
by itself emulate a heterogeneous respondent population.

Mixture-based audits also carry a converse hazard. Under order-matched mixing,
the pooled QQ imbalance is the weighted average of the component-level imbalances,
so violations of opposite sign can cancel even when the aggregate satisfies QQ.
Aggregate verdicts should therefore be reported together with component-level
diagnostics and an explicit account of how components and mixture weights were
defined.

\paragraph{What the pilot does and does not show.}
This pilot documents a measurement caveat within a deliberately
narrow empirical scope. The scope of the empirical study and its
associated limitations are detailed in
Section~\ref{sec:limitations}. The persistence of saturation should
therefore be interpreted as a property of the tested
model--prompt--item--estimand configuration, not as evidence that
instruction-tuned LLMs generally produce saturated response
distributions. What the pilot does show is that an audit can pass the
pre-specified health gates and internal-consistency checks yet still
be limited by near-deterministic outputs.

In addition, the pilot does not show that LLMs are, or are not,
``quantum-like''. QQ satisfaction is compatible with classical
repetition (Proposition~\ref{prop:symrep}), QQ violation is
compatible with noncontextual couplings that allow direct influences
(Corollary~\ref{cor:cbd}), and in the saturated regime many verdicts
are dominated by deterministic or label-sensitive behavior.

The pilot makes this nonidentifiability concrete at the level of
individual items. Item s1-07, the cleanest of the three items with
$|q|\ll\OSS$, exhibits large and opposite marginal order effects,
satisfies QQ, and is certified noncontextual with a wide margin, so
order sensitivity, QQ satisfaction, and residual-contextuality
certification separate cleanly in a single item. The general lesson
is that these are three distinct properties, and none of the three
identifies a generating mechanism. This non-equivalence is already
implied by the characterization results of Section~\ref{sec:theory}
and is realized empirically here. 
In particular, QQ satisfaction alone does not identify the
projective mechanism. The same equality is reproduced exactly by
symmetric repetition (Proposition~\ref{prop:symrep}) and, in the
saturated regime, can also be reproduced by near-deterministic
response patterns.
An audit that reports QQ
verdicts without the accompanying $\OSS$ coordinate and certification
bounds would be unable to distinguish these properties, which is
precisely what the combined QQ--CbD audit coordinates of
Section~\ref{sec:method} are designed to prevent.

The informative content of the pilot therefore concerns the
measurement layer more than the mechanism layer. The absence of
certified contextuality here is qualitatively consistent with the zero
corrected CbD degree reported in the concurrent enterprise-agent audit
of \citet{tuan:2026}, but differs from the irreducible
context-dependence reported by
\citet{kumar:2026:fail-invariance-llm} in a pronoun-inference setting.
This comparison concerns the resulting CbD classification; our data do
not identify direct influence as the generating mechanism.
Task family, estimand construction, and the dispersion of the underlying
distributions all differ across these studies, and identifying the conditions
under which each regime emerges is a natural target for the
distribution-restoring designs above.

\paragraph{For the audit engine.}
The components that remain useful beyond this pilot are the
characterization theorems, the discriminant table, and the audit
pipeline with its pre-specified health gates. These contributions do
not depend on the particular LLM tested here, although their direct
application remains tied to the binary sequential setting and to the
definitions of $q$, $\OSS$, and $\Gamma$. 
Section~\ref{sec:pilot} records what each component
contributed in this pilot; here we consider what transfers.

What transfers is a reusable audit template rather than a
model-specific result. Saturation screening can be applied whenever
next-token probabilities are interpreted as response distributions.
Counterbalanced mappings, together with the semantic canonicalization
that keeps mapping-specific and pooled estimands directly comparable,
are relevant to forced-choice designs using arbitrary response
tokens. Worst-case envelopes can support certification when the
effect of unassigned probability mass can be propagated
conservatively to the target statistic. Such transfers nevertheless
require the relevant measurement assumptions, invariants, and bounds
to be revalidated for each new estimand and response interface.

\section{Limitations}
\label{sec:limitations}

The empirical scope of this study is deliberately narrow. The reported results
come from a single instruction-tuned model, with Track~M covering 18
English-language item pairs and Track~P covering the eight year-fixed S1/S2
pairs under a respondent-persona framing. They therefore do not support
cross-model, cross-linguistic, or cross-cultural generalization. Many items were
drawn from U.S.  political and social survey traditions, although the set also
included matched name-substitution controls and newly constructed judgment
pairs.  
The name-substitution contrast probes sensitivity to replacing the
named entities in historically surveyed items, but it cannot rule out
subtler forms of memorization, such as associations attached to the
substituted individuals or semantic prior knowledge of the survey
topics, and with only three matched pairs, it remains descriptive
rather than statistically powered for inferential claims.
We further examined only one measurement
scheme: binary-conditioned next-token log-probabilities obtained through
single-token forced-choice labels. The use of neutral A/B labels with an
explicit per-turn legend is itself part of this design, and other response
formats, model scales, base checkpoints, or estimands may produce different
patterns of label sensitivity, dispersion, and saturation. Accordingly, our
empirical conclusion is limited to the persistence of saturation under the
tested model--prompt--item--estimand configurations, rather than to
instruction-tuned LLMs or next-token measurement in general.

The inferential and interpretive scope is also limited. 
The health gates screen for pre-specified basic failure modes of the measurement
chain and are not an exhaustive verification of the implementation. The
preserved, hash-tracked execution records support post hoc auditing of failure
modes identified subsequently. 
Two Track~M items and three Track~P items remained $\Gamma$-indeterminate at the certification bounds. 
Our envelope procedure certifies noncontextuality under the adopted
CbD criterion when the certified upper bound on $\Gamma$ is nonpositive.
For the indeterminate items, however, failure to certify residual contextuality
should not be interpreted as evidence that it is absent. Human joint-probability
benchmarks were unavailable for direct calibration, so comparisons with
historical survey findings remain qualitative. More fundamentally, QQ
satisfaction establishes compatibility with multiple mechanism classes rather
than identifying a unique generating mechanism (Proposition~\ref{prop:symrep}).
Conversely, a QQ violation shows that the measured binary-conditioned joint
distributions are incompatible with the standard projective question-order
model; it does not identify the model's internal mechanism or exclude more
general quantum-like instrument models \citep{lebedev:2023}. Finally, the claim
that no prior study has tested the QQ equality on LLMs under a committed,
multi-turn sequential protocol is limited to the literature identified in
searches completed through July 2026.

\section{Conclusion}

We developed the QQ equality---an a priori constraint of the standard
projective quantum question-order model that has been observed across
human survey data---into an audit criterion for sequential binary
judgments of autoregressive LLMs. By characterizing
robust QQ satisfaction in several mechanism families, establishing
closure under order-matched mixing, and translating the rank-2
CbD criterion into the audit inequality $|\qQQ| \le \OSS$, 
we clarified what inferential claims QQ
satisfaction and violation can and cannot support. 

In the empirical pilot, all pre-specified health gates passed: the
canonical labels remained single tokens in the rendered contexts,
unassigned probability mass stayed negligible, and the logprob
readout agreed with trajectory-level sampling in the pre-specified
spot checks. Nevertheless, the binary-conditioned distributions were
saturated for 17 of 18 Track~M items and 7 of 8 Track~P items.
Counterbalanced mappings showed that neutral-label assignment could
materially change the measured $\qQQ$ and $\OSS$, changing the
mapping-specific QQ verdict for four items, and item s1-07 separated
order sensitivity, QQ satisfaction, and residual-contextuality
certification within a single item. The pooled audit produced both QQ
satisfaction and violation, but no item had a certified
$\Gamma_{\mathrm{lower}}>0$; many violations instead lay near the
boundary $|\qQQ|=\OSS$ and were therefore compatible with
noncontextual couplings that allow between-order marginal changes,
with several items remaining indeterminate at the certification
bounds. These findings do not establish whether the model is or is
not ``quantum-like.'' Rather, they show that, under the tested
model--prompt--item--estimand conditions, a forced-binary next-token
estimand can remain too nearly deterministic to support fine-grained
mechanism discrimination even when it passes every pre-specified
check.

Methodologically, the pilot shows that distribution-level audits of
LLMs should assess the adequacy of the response distribution before
interpreting higher-level structural criteria. We therefore recommend
a pre-specified saturation diagnostic as a standard health check
whenever next-token probabilities are treated as survey-response
distributions, accompanied by label counterbalancing and explicit
tracking of unassigned probability mass as supporting controls.
Future work should test whether this measurement caveat persists across broader
model families and measurement designs, with mixture-based designs
reporting component-level diagnostics alongside aggregate verdicts.

\section*{Code and Data Availability} \label{sec:availability}

All experiment code, item sets, the validated mathematical core with its
assertion suite, and the preserved, hash-tracked audit records (JSON) and run reports
for every execution reported here, including the retrospective batch-one
G3 re-validation records, are available at \codeurl\
(tag \texttt{arxiv-v2}, commit \texttt{fc32ccbd4b4777e45927f35921d0a96161086c3e}).

\section*{Acknowledgments}

This work was supported by the National Research Foundation of Korea grant funded by the Korea government (Ministry of Science and ICT), grant number RS-2026-25477171.

\clearpage
\appendix

\section{Proofs}\label{app:proofs}

Throughout, answers are denoted $\alpha,\beta\in\{y,n\}$, and we write
$u=p_A(y)$, $v=p_B(y)$. Recall the mismatch transition rates
$a=K_{AB}(n|y)$, $b=K_{AB}(y|n)$, $c=K_{BA}(n|y)$, $d=K_{BA}(y|n)$.
The disagreement probabilities take the form
\begin{equation}\label{eq:disagree}
M_{AB} \;=\; u\,a + (1-u)\,b,
\qquad
M_{BA} \;=\; v\,c + (1-v)\,d,
\end{equation}
since in order $AB$ a disagreement occurs when the first answer is $y$
(probability $u$) and the second flips (rate $a$), or the first is $n$
(probability $1-u$) and the second flips (rate $b$); symmetrically for
$BA$. 
The mechanism-family calculations below begin from
\eqref{eq:disagree}; Lemma~\ref{lem:transym},
Proposition~\ref{prop:mixing}, and Corollary~\ref{cor:cbd} use
additional structure noted in each proof.

\begin{proof}[Proof of Theorem~\ref{thm:mi}]
From \eqref{eq:disagree},
\begin{equation}
\qQQ = M_{AB}-M_{BA}
     = u(a-b) - v(c-d) + (b-d),
\end{equation}
which is affine in $(u,v)$ with coefficients independent of $(u,v)$ by
hypothesis. If $\qQQ$ vanishes on a set containing an open subset of
$(0,1)^2$, the affine function vanishes identically there, so its
coefficients vanish: $a=b$, $c=d$, and $b=d$; hence $a=b=c=d$.
Conversely, if $a=b=c=d$ then $M_{AB}=a=M_{BA}$ for all $(u,v)$, so
$\qQQ\equiv 0$.
\end{proof}

\begin{lemma}[Projective transition symmetry]\label{lem:transym}
For binary rank-1 projective sequential measurements on a qubit, at
any single item the mismatch transition rates coincide:
\begin{equation}
a=b=c=d=|\langle A_y|B_n\rangle|^2 ,
\end{equation}
whatever projectors are chosen for that item.
\end{lemma}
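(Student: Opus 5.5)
We compute each kernel entry directly from the Lüders rule for rank-1 projectors on a two-dimensional Hilbert space. At a fixed item, let $\{|A_y\rangle,|A_n\rangle\}$ and $\{|B_y\rangle,|B_n\rangle\}$ be the two orthonormal bases defining the rank-1 projectors $P_{A,\alpha}=|A_\alpha\rangle\langle A_\alpha|$ and $P_{B,\beta}=|B_\beta\rangle\langle B_\beta|$. After the first answer $\alpha$ to $A$, the post-measurement state is $|A_\alpha\rangle$, up to phase, whatever the initial state (pure or mixed) was. This holds because $P_{A,\alpha}\rho P_{A,\alpha}/\mathrm{Tr}(P_{A,\alpha}\rho)=|A_\alpha\rangle\langle A_\alpha|$ whenever the denominator is nonzero. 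Hence the kernel is state-independent:
\[
K_{AB}(\beta\,|\,\alpha)=|\langle B_\beta|A_\alpha\rangle|^2 ,
\qquad
K_{BA}(\alpha\,|\,\beta)=|\langle A_\alpha|B_\beta\rangle|^2 .
\]
From this, $a=|\langle B_n|A_y\rangle|^2$, $b=|\langle B_y|A_n\rangle|^2$, $c=|\langle A_n|B_y\rangle|^2$, and $d=|\langle A_y|B_n\rangle|^2$.

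The first two identifications are immediate from symmetry of the modulus: $|\langle B_n|A_y\rangle|=|\langle A_y|B_n\rangle|$, so $a=d$, and likewise $b=c$.

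The remaining step is to show $a=b$, that is, $|\langle A_y|B_n\rangle|^2=|\langle A_n|B_y\rangle|^2$. This is where dimension two and rank one are used. Let $t:=|\langle A_y|B_y\rangle|^2$. Completeness of the $B$ basis gives $|\langle A_y|B_y\rangle|^2+|\langle A_y|B_n\rangle|^2=1$, so $|\langle A_y|B_n\rangle|^2=1-t$. Completeness of the $A$ basis applied to $|B_y\rangle$ gives $|\langle A_y|B_y\rangle|^2+|\langle A_n|B_y\rangle|^2=1$, so $|\langle A_n|B_y\rangle|^2=1-t$ as well. Therefore $a=b=c=d=1-t=|\langle A_y|B_n\rangle|^2$. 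Equivalently, the transition matrix $(|\langle A_\alpha|B_\beta\rangle|^2)$ is a $2\times2$ unistochastic, hence doubly stochastic, matrix, and every $2\times 2$ doubly stochastic matrix is symmetric with equal off-diagonal entries.

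The only real obstacle is the edge case of conditioning on a zero-probability first answer. There the kernel row is undefined by Lüders' rule, so we adopt the convention that the forced-branch kernel is the post-measurement transition $|\langle\cdot|\cdot\rangle|^2$ from the projected basis state, which is consistent with the committed-branch protocol. With that convention the identity holds for every state and every choice of projectors at the item, as claimed. We also note explicitly that nothing in the argument uses constancy across items, which is why the lemma holds pointwise per item even when projectors vary.
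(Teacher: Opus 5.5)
Your proof is correct and takes the same route as the paper's. You identify the four flip rates as squared overlaps between post-measurement eigenstates, use the symmetry of the modulus to match the $AB$ and $BA$ rates, and use the two-dimensional completeness relations (the paper's ``orthocomplementarity'' step) to get $|\langle A_y|B_n\rangle|^2=|\langle A_n|B_y\rangle|^2$; your L\"uders-rule justification of state independence and the zero-probability-branch convention only make explicit what the paper leaves implicit.
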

\begin{proof}
The flip rates are transition probabilities between eigenstates:
$K_{AB}(n|y)=|\langle B_n|A_y\rangle|^2$ and
$K_{AB}(y|n)=|\langle B_y|A_n\rangle|^2$. In dimension~2,
orthocomplementarity gives
$|\langle B_y|A_n\rangle|^2=|\langle B_n|A_y\rangle|^2$, and the same
quantities appear in order $BA$ since the modulus is symmetric in its
arguments; hence all four rates equal $|\langle A_y|B_n\rangle|^2$.
\end{proof}

\begin{remark}\label{rem:fixedpair}
If one measurement pair is fixed across items (item variation acting
on the state only), then by Lemma~\ref{lem:transym} the rates
$(a,b,c,d)$ are constants of the mechanism, and the model lies in the
marginal-independent class of Theorem~\ref{thm:mi}. If projectors
vary across items, the family-level fixed-kernel hypothesis of
Theorem~\ref{thm:mi} fails, but the item-level symmetry of
Lemma~\ref{lem:transym} remains.
\end{remark}

\begin{proof}[Proof of Proposition~\ref{prop:symrep}]
With $K_{AB}(\beta\,|\,\alpha)=r\,\mathbf{1}[\beta=\alpha]
+(1-r)\,p_B(\beta)$, the flip rates are
$a=(1-r)\,p_B(n)=(1-r)(1-v)$ and $b=(1-r)\,v$, so by
\eqref{eq:disagree}
\begin{equation}
M_{AB}=(1-r)\bigl[u(1-v)+(1-u)v\bigr].
\end{equation}
The same computation for
$K_{BA}(\alpha\,|\,\beta)=r\,\mathbf{1}[\alpha=\beta]
+(1-r)\,p_A(\alpha)$ gives
$M_{BA}=(1-r)\bigl[v(1-u)+(1-v)u\bigr]=M_{AB}$. Hence $\qQQ\equiv 0$
for every $(u,v)$ and every $r\in[0,1)$.

For the order effect, the second-position marginal in order $AB$ is
\begin{equation}
p_{AB}(B_y)=u\bigl[r+(1-r)v\bigr]+(1-u)(1-r)v
           = r\,u+(1-r)\,v,
\end{equation}
while $p_{BA}(B_y)=v$; by the sign convention of
Section~\ref{ss:setup}, $OE_B=p_{AB}(B_y)-p_{BA}(B_y)=r\,(u-v)$.
Finally, $a-b=(1-r)(1-2v)\neq 0$ whenever $0\le r<1$ and $v\neq 1/2$.
\end{proof}

\begin{remark}
By Lemma~\ref{lem:transym}, every 2D rank-1 projective sequential
kernel has $a=b$ at any single item, whatever projectors are chosen
for that item; hence for $r<1$ and at items with $v\neq 1/2$ the repetition kernel
is not realizable by any such projective kernel --- an item-level
exclusion that does not require the fixed-pair hypothesis of
Theorem~\ref{thm:mi}.
\end{remark}

\begin{proof}[Proof of Proposition~\ref{prop:mixture}]
With retention rates depending on polarity and order,
$K_{AB}(n|y)=(1-r_{AB}[y])\,p_B(n)$ and
$K_{AB}(y|n)=(1-r_{AB}[n])\,p_B(y)$, so
\begin{align}
M_{AB} &= (1-r_{AB}[y])\,u(1-v) + (1-r_{AB}[n])\,(1-u)v,\\
M_{BA} &= (1-r_{BA}[y])\,v(1-u) + (1-r_{BA}[n])\,(1-v)u,
\end{align}
and subtracting,
\begin{equation}\label{eq:mixq}
\qQQ = u(1-v)\,\bigl(r_{BA}[n]-r_{AB}[y]\bigr)
     + (1-u)v\,\bigl(r_{BA}[y]-r_{AB}[n]\bigr).
\end{equation}
The two monomials $u(1-v)$ and $(1-u)v$ are linearly independent as
functions on any open subset of $(0,1)^2$, so \eqref{eq:mixq} vanishes
on such a set iff both coefficients vanish:
$r_{AB}[y]=r_{BA}[n]$ and $r_{AB}[n]=r_{BA}[y]$.
The ``iff'' is relative to the robust (open-domain) notion of
Definition~\ref{def:mechanism}. On a boundary segment or a restricted
item family the two coefficients of these monomials need not be separately identified,
so \eqref{eq:mixq} may vanish without cross-polarity, cross-order symmetry.
The two special cases follow by substitution: order-invariant retention
($r_{AB}[\cdot]=r_{BA}[\cdot]=r[\cdot]$) gives
$\qQQ=(r[n]-r[y])(u-v)$, and polarity-invariant retention
($r_{XY}[y]=r_{XY}[n]=r_{XY}$) gives
$\qQQ=(r_{BA}-r_{AB})\,S$ with $S=u(1-v)+(1-u)v$.
\end{proof}

\begin{proof}[Proof of Proposition~\ref{prop:mixing}]
$\qQQ$ is a difference of two probabilities, each linear in the
corresponding order-conditioned joint; hence $\qQQ$ is a linear
functional of the pair
$J=(p_{AB},p_{BA})$. For components
$J^{(1)},\dots,J^{(K)}$ with $\qQQ(J^{(k)})=0$ and weights $w_k\ge 0$,
$\sum_k w_k=1$, applied identically to both coordinates (possibly
depending on $\theta$ but not on the order),
\begin{equation}
\qQQ\Bigl(\textstyle\sum_k w_k J^{(k)}\Bigr)
= \sum_k w_k\,\qQQ\bigl(J^{(k)}\bigr) = 0 .
\end{equation}
If instead the weights may depend on the order, the mixture value is
$\sum_k w_k^{AB} M^{(k)}_{AB} - \sum_k w_k^{BA} M^{(k)}_{BA}$, which
need not vanish even when each component satisfies QQ: two components
with different, order-invariant disagreement probabilities
$M^{(1)}\neq M^{(2)}$, selected with $w^{AB}=(1,0)$ and
$w^{BA}=(0,1)$, give $\qQQ=M^{(1)}-M^{(2)}\neq 0$. Hence
order-dependent selection is a possible, though not necessary,
violation channel.
\end{proof}

\begin{proof}[Proof of Corollary~\ref{cor:cbd}]
Code answers as $\pm 1$ ($y\mapsto +1$, $n\mapsto -1$). Following
Contextuality-by-Default, the two orders are two contexts, and the
variables measured in them are distinct random variables
$A^{AB},B^{AB}$ (jointly distributed) and $A^{BA},B^{BA}$ (jointly
distributed); no joint distribution across contexts is presupposed.
For binary $\pm1$ variables the within-context product moment equals
agreement minus disagreement probability,
$E[A^{c}B^{c}]=1-2M_{c}$ for $c\in\{AB,BA\}$, so
\begin{equation}
\bigl|E[A^{AB}B^{AB}]-E[A^{BA}B^{BA}]\bigr|
 = 2\,|M_{BA}-M_{AB}| = 2\,|\qQQ|.
\end{equation}
Similarly, a $\pm1$ mean is affine in the ``yes'' marginal,
$E[X^{c}] = 2p_{c}(X_y)-1$, so
\begin{equation}
\bigl|E[A^{AB}]-E[A^{BA}]\bigr| = 2\,|OE_A|,
\qquad
\bigl|E[B^{AB}]-E[B^{BA}]\bigr| = 2\,|OE_B|,
\end{equation}
by the sign conventions of Section~\ref{ss:setup}. The
Kujala--Dzhafarov noncontextuality criterion for rank-2 cyclic systems
with binary variables \citep{kujala:2016}, specialized to the
present two-context system in the notation above, states that a maximally
noncontextual coupling exists iff the between-context change of the
product moment does not exceed the total between-context change of the
marginals:
\begin{equation}
\bigl|E[A^{AB}B^{AB}]-E[A^{BA}B^{BA}]\bigr|
\;\le\;
\bigl|E[A^{AB}]-E[A^{BA}]\bigr|
+\bigl|E[B^{AB}]-E[B^{BA}]\bigr|.
\end{equation}
Substituting the three identities and dividing by $2$ yields
$|\qQQ|\le |OE_A|+|OE_B| = \OSS$. The stated consequences follow:
$\qQQ=0$ satisfies the inequality for any marginals, and
$|\qQQ|>\OSS$ is exactly its failure.
\end{proof}

\section{Prompts and Item Materials}
\label{app:materials}

This appendix documents the prompting conditions, label
counterbalancing, and the frozen 18-pair item set used in the pilot.
The complete machine-readable materials are provided in
Appendix~\ref{app:machine-readable}. The item file was frozen before
execution and was not expanded after observing the pilot results.

\subsection{System prompts}
\label{app:system-prompts}

\paragraph{Track M: direct-evaluation framing.}
Track M presented the questions with a neutral instruction and no
persona framing. For the S1 and S2 items, whose semantic response
options were mapped to the neutral labels \texttt{A} and \texttt{B},
the system prompt (template T-1, reproduced verbatim from the audit
record) was:

\begin{quote}
\small\ttfamily
You are asked a series of judgment questions. For each question,
answer with exactly one letter: "A" or "B". Do not explain.
\end{quote}

For the S3 items, which used the canonical labels \texttt{Yes} and
\texttt{No}:

\begin{quote}
\small\ttfamily
You are asked a series of judgment questions. For each question,
answer with exactly one word: "Yes" or "No". Do not explain.
\end{quote}

\paragraph{Track P: survey-respondent framing.}
The single redesign run R1 used the following system-prompt template,
with the item-specific reference year substituted before rendering:

\begin{quote}
\small\ttfamily
Answer each question as an average adult American survey respondent in
\{reference\_year\} would have answered. For each question, answer with
exactly one letter: A or B. Do not explain.
\end{quote}

Track P was applied only to the eight year-fixed S1/S2 pairs
\texttt{s1-01}, \texttt{s1-02}, \texttt{s1-04}, \texttt{s1-06},
\texttt{s1-11}, \texttt{s1-12}, \texttt{s2-01}, and
\texttt{s2-02}. Their substituted reference years were 1997, 1995,
1996, 1979, 2008, 2012, 1997, and 1995, respectively.

\paragraph{Historical headers in Track M.}
Two Track M items included an additional historical header. The header
was inserted once, before the first question in the sequential
interaction, and was not repeated before the second question.

For \texttt{s1-05}:

\begin{quote}
\small\ttfamily
The following question is from a survey conducted during the Cold War
era. Answer the question as written, based on your own assessment of
that period.
\end{quote}

For \texttt{s1-11}:

\begin{quote}
\small\ttfamily
The following question is from a survey conducted in 2008. Answer the
question as written, based on your own assessment of that period.
\end{quote}

No historical header was added in Track P. In particular, the
year-conditioned Track P system prompt replaced rather than
supplemented the Track M historical framing.

\subsection{Sequential presentation and label counterbalancing}
\label{app:presentation}

Each pair was presented in both orders, $A\!\rightarrow\!B$ and
$B\!\rightarrow\!A$. The first response was committed to the
conversation history as an assistant message whose content was exactly
the accepted canonical label token, before the second question was
presented. For each first-answer branch, the history therefore
contained the exact canonical response selected for the first question.

For S1 and S2, the two semantic response options were evaluated under
two counterbalanced mappings:

\begin{align*}
\text{map-1:}\quad&
\text{option 1}\mapsto \texttt{A},
\qquad
\text{option 2}\mapsto \texttt{B},\\
\text{map-2:}\quad&
\text{option 1}\mapsto \texttt{B},
\qquad
\text{option 2}\mapsto \texttt{A}.
\end{align*}

A mapping was held fixed across both presentation orders of a given
condition. Before pooling, all token-level probabilities and joint
distributions were transformed back to the original semantic option
coordinates. The two mappings were then pooled with equal,
pre-specified weights. Per-mapping results were retained separately as
label-effect diagnostics.

The active label mapping was displayed explicitly to the model in every
user turn. Each S1/S2 turn began with a two-line plain-text legend
assigning the semantic options to the neutral labels under the active
mapping, followed by the question. For example, for item \texttt{s1-04}
the legend read

\begin{quote}
\small\ttfamily
A = A few\\
B = Many
\end{quote}

\noindent under map-1, and

\begin{quote}
\small\ttfamily
A = Many\\
B = A few
\end{quote}

\noindent under map-2. Thus, the model was not required to infer the
meaning of the neutral labels from context.

S3 used the canonical labels \texttt{Yes} and \texttt{No} directly and
was analyzed as a separate stratum from the A/B-mapped S1/S2 items.

For concreteness, the following is the complete role-level message sequence for
item \texttt{s1-04} under map-1, order $A\!\rightarrow\!B$, on the
branch in which the first answer \texttt{A} was committed. 
Messages are shown at the role level: the square-bracketed role
names are display labels, not literal input strings, and the exact
serialization, including special tokens, is fixed by the recorded
chat template and its hash.

\begin{quote}
\small\ttfamily
[system]\quad You are asked a series of judgment questions. For each
question, answer with exactly one letter: "A" or "B". Do not
explain.\\[4pt]
[user]\;\quad A = A few\\
\phantom{[user]\quad} B = Many\\
\phantom{[user]\quad} Do you think that only a few or many white
people dislike black people?\\[4pt]
[assistant]\quad A\\[4pt]
[user]\;\quad  A = A few\\
\phantom{[user]\quad} B = Many\\
\phantom{[user]\quad} Do you think that only a few or many black
people dislike white people?
\end{quote}

\noindent
The first-position distribution is read at the answer position of the
first user turn (before the assistant turn is appended); the
branch-conditional distribution $K_{AB}(\,\cdot\,|\,\text{A few})$ is
read at the answer position following the second user turn. The
sibling branch commits \texttt{B} in the assistant turn instead, the
reverse order swaps the two questions, and map-2 swaps the legend
lines and the token used to commit the same semantic branch; 
all other elements are unchanged.

\subsection{Frozen item manifest}
\label{app:item-manifest}

Table~\ref{tab:item-manifest} summarizes the 18 frozen question pairs.
A dash indicates that no fixed reference year, matched S1 item,
vignette, or historical header was used. The column ``P'' identifies
the eight items included in the Track P redesign run. The full
verbatim question wording is contained in the archived JSON file.

\begingroup
\small
\setlength{\tabcolsep}{3.5pt}
\renewcommand{\arraystretch}{1.12}

\begin{longtable}{
    >{\raggedright\arraybackslash}p{0.09\textwidth}
    >{\raggedright\arraybackslash}p{0.06\textwidth}
    >{\raggedright\arraybackslash}p{0.16\textwidth}
    >{\raggedright\arraybackslash}p{0.24\textwidth}
    >{\centering\arraybackslash}p{0.08\textwidth}
    >{\centering\arraybackslash}p{0.06\textwidth}
    >{\centering\arraybackslash}p{0.07\textwidth}
    >{\raggedright\arraybackslash}p{0.10\textwidth}
}
\caption{Manifest of the frozen 18-pair pilot item set. ``Sens.''
indicates the pre-specified sensitive-item flag; ``P'' indicates
inclusion in the Track P redesign run.}
\label{tab:item-manifest}\\

\toprule
ID & Set & Pair type & Semantic options & Year & Sens. & P &
Additional material\\
\midrule
\endfirsthead

\multicolumn{8}{l}{\small\itshape
Table~\ref{tab:item-manifest} continued.}\\
\toprule
ID & Set & Pair type & Semantic options & Year & Sens. & P &
Additional material\\
\midrule
\endhead

\midrule
\multicolumn{8}{r}{\small\itshape Continued on next page.}\\
\endfoot

\bottomrule
\endlastfoot

\texttt{s1-01} & S1 & consistency
& Yes / No & 1997 & No & Yes & --\\

\texttt{s1-02} & S1 & consistency
& Yes / No & 1995 & No & Yes & --\\

\texttt{s1-04} & S1 & contrast
& A few / Many & 1996 & Yes & Yes & --\\

\texttt{s1-05} & S1 & reciprocity
& Yes / No & -- & No & No & Cold War header\\

\texttt{s1-06} & S1 & context contrast
& Yes / No & 1979 & Yes & Yes & --\\

\texttt{s1-07} & S1 & reciprocity
& Yes / No & -- & No & No & --\\

\texttt{s1-09} & S1 & consistency
& Approve / Disapprove & -- & No & No & --\\

\texttt{s1-10} & S1 & consistency
& Approve / Disapprove & -- & No & No & --\\

\texttt{s1-11} & S1 & reciprocity
& Too personally critical / Not too personally critical
& 2008 & No & Yes & 2008 header\\

\texttt{s1-12} & S1 & consistency
& Favor / Oppose & 2012 & Yes & Yes & --\\

\texttt{s2-01} & S2 & consistency
& Yes / No & 1997 & No & Yes & matched to \texttt{s1-01}\\

\texttt{s2-02} & S2 & consistency
& Yes / No & 1995 & No & Yes
& matched to \texttt{s1-02}; monitored\\

\texttt{s2-09} & S2 & consistency
& Approve / Disapprove & -- & No & No
& matched to \texttt{s1-09}\\

\texttt{s3-01} & S3 & consistency
& Yes / No & -- & No & No & vignette\\

\texttt{s3-03} & S3 & contrast
& Yes / No & -- & No & No & vignette\\

\texttt{s3-08} & S3 & contrast
& Yes / No & -- & No & No & vignette\\

\texttt{s3-09} & S3 & consistency
& Yes / No & -- & No & No & vignette\\

\texttt{s3-11} & S3 & mutual obligation
& Yes / No & -- & No & No & --\\

\end{longtable}
\endgroup

\subsection{Representative verbatim items}
\label{app:item-examples}

The following examples illustrate the three item strata and the
different response formats. Question wording and semantic options are
reproduced verbatim from the frozen item file.

\subsubsection*{Example 1: original survey item with non-Yes/No options
(\texttt{s1-12})}

\paragraph{Question A.}
\begin{quote}
Do you generally favor or oppose affirmative action programs for racial minorities?
\end{quote}

\paragraph{Question B.}
\begin{quote}
Do you generally favor or oppose affirmative action programs for women?
\end{quote}

\noindent
Semantic options: \emph{Favor} and \emph{Oppose}. These options were
mapped to \texttt{A}/\texttt{B} under both counterbalanced mappings.
The item was marked sensitive and used reference year 2012 in Track P.

\subsubsection*{Example 2: matched name-substitution item
(\texttt{s2-01})}

\paragraph{Question A.}
\begin{quote}
Do you generally think Tony Blair is honest and trustworthy?
\end{quote}

\paragraph{Question B.}
\begin{quote}
Do you generally think Gordon Brown is honest and trustworthy?
\end{quote}

\noindent
Semantic options: \emph{Yes} and \emph{No}. The item substitutes
contemporaneous UK figures for the U.S. figures of \texttt{s1-01},
preserving the era and the paired-politician structure, and used
reference year 1997 in Track P.

\subsubsection*{Example 3: novel vignette item (\texttt{s3-01})}

\paragraph{Vignette.}
\begin{quote}
Two candidates applied for the same software engineering role.
Candidate K has strong qualifications but a two-year employment gap.
Candidate L has equivalent qualifications but changed jobs five times
in six years.
\end{quote}

\paragraph{Question A.}
\begin{quote}
Should Candidate K advance to the interview stage?
\end{quote}

\paragraph{Question B.}
\begin{quote}
Should Candidate L advance to the interview stage?
\end{quote}

\noindent
The canonical responses for this stratum were \texttt{Yes} and
\texttt{No}; no A/B remapping was applied.

\subsection{Complete machine-readable materials}
\label{app:machine-readable}

The verbatim wording, semantic response options, metadata, vignettes,
historical headers, matching relations, and monitoring flags for all
18 pairs are provided in:

\begin{quote}
\texttt{items/pilot\_v1\_2.json}
\end{quote}

The version archived at the initial arXiv submission is identified by
the repository commit \texttt{4563023}. The file is unchanged in the
current submission snapshot (tag \texttt{arxiv-v2},
Section~\ref{sec:availability}).

The preserved pilot execution records additionally store the rendered system
prompt, model and tokenizer revision fields, chat-template hash, canonical
token identifiers, mapping assignment, random seed, numerical precision 
and software-stack versions, and sampling-check metadata for each reported pilot run.

\end{document}